\documentclass{article} 
\usepackage{iclr2022_conference, times}


\usepackage{amsmath,amsfonts,bm}









\def\eqref#1{equation~\ref{#1}}









\def\1{\bm{1}}










\DeclareMathAlphabet{\mathsfit}{\encodingdefault}{\sfdefault}{m}{sl}
\SetMathAlphabet{\mathsfit}{bold}{\encodingdefault}{\sfdefault}{bx}{n}











\newcommand{\R}{\mathbb{R}}



\usepackage{hyperref}
\usepackage{url}
\usepackage{amsmath}
\usepackage{amsfonts}
\usepackage{amsthm}
\usepackage{graphicx}

\usepackage[capitalize]{cleveref}

\newtheorem{proposition}{Proposition}
\newtheorem*{proposition*}{Proposition}

\title{Robbing the Fed: Directly Obtaining Private Data in Federated Learning with Modified Models}

\author{Liam Fowl\thanks{Authors contributed equally. Order chosen alphabetically.}
\\ Department of Mathematics \\
University of Maryland \\
\And Jonas Geiping$^*$
\\ Department of Computer Science \\
University of Maryland \\
\AND Wojtek Czaja
\\ Department of Mathematics \\
University of Maryland
\And Micah Goldblum
\\ Center for Data Science \\
New York University
\And Tom Goldstein 
\\ Department of Computer Science \\
University of Maryland}

%

\iclrfinalcopy 
\begin{document}

\maketitle

\begin{abstract}
Federated learning has quickly gained popularity with its promises of increased user privacy and efficiency. Previous works have shown that federated gradient updates contain information that can be used to approximately recover user data in some situations. These previous attacks on user privacy have been limited in scope and do not scale to gradient updates aggregated over even a handful of data points, leaving some to conclude that data privacy is still intact for realistic training regimes. 
In this work, we introduce a new threat model based on minimal but malicious modifications of the shared model architecture which enable the server to directly obtain a verbatim copy of user data from gradient updates without solving difficult inverse problems. Even user data aggregated over large batches -- where previous methods fail to extract meaningful content --  can be reconstructed by these minimally modified models.
\end{abstract}

\section{Introduction}

Federated learning \citep{konecny_federated_2015}, also known as collaborative learning \citep{shokri_privacy-preserving_2015}, is a mechanism for training machine learning models in a distributed fashion on multiple user devices. In the simplest setting, a central \textit{server} sends out model states to a group of \textit{users}, who compute an update to the model based on their local data. These updates are then returned to the server, aggregated, and used to train the model. Over multiple rounds, this protocol can train a machine learning model, distributed over all users, without exchanging local data -- only model updates are exchanged.
Two central goals of federated learning are to improve training efficiency by decreasing communication overhead and to side-step issues of user-level privacy and data access rights that have become a focus of public attention in recent years \citep{veale_algorithms_2018}.

Accordingly, many organizations, ranging from large tech companies \citep{mcmahan_federated_2017} to medical institutions with especially strict privacy laws, such as hospitals \citep{jochems_distributed_2016}, have utilized federated learning to train machine learning models.
However, in practice, data privacy is not guaranteed in general, but  is dependent on a large number of interdependent settings and design choices specific to each federated learning system. In this work, we focus on the user perspective of privacy, and we study federated learning systems in which the central server is not able to directly view user data.

The key privacy concern for users is whether model updates reveal too much about the data on which they were calculated. Although \citet{kairouz_advances_2021} discuss that ``model updates are more focused on the learning task at hand than is the raw data (i.e. they contain strictly no additional information about the user, and typically significantly less, compared to the raw data)", scenarios can be constructed in which the model updates themselves can be inverted to recover their input user information \citep{wang_beyond_2018,melis_exploiting_2018}. Simple knowledge of the shared model state and model update can be sufficient for such an attack \citep{zhu_deep_2019,geiping_inverting_2020-1}. These inversion attacks are particularly fruitful if a user's model update is based on a single data point or only a small batch.
Accordingly, a strong defense against these attacks is aggregation. The user only reports model updates aggregated over a significant number of local data points, and data from multiple users can be combined with secure aggregation protocols \citep{bonawitz_practical_2017} before being passed to the server. This ability to aggregate user updates while maintaining their utility is thought to be the main source of security in federated learning. Averaging raw local data in similar amounts would make it unusable for training, but model updates can be effectively aggregated.

Previous inversion attacks typically focus on a threat model in which the server (server here is a stand-in for any party with root access to the server or its incoming and outgoing communication) is interested in uncovering user information by examining updates, but without modifying the federated learning protocol, a behavior also referred to as \textit{honest-but-curious} or \textit{semi-honest} \citep{goldreich_foundations_2009}. In our case, where the party intending to recover user data \emph{is} the server, this ``honest" scenario appears contrived, as the server can modify its behavior to obtain private information. 
In this work, we are thus interested in explicitly \textit{malicious servers} that may modify the model architecture and model parameters sent to the user.

We focus on scenarios in which an agent obtains data without making suspicious changes to the client code or learning behavior.
One scenario which enables this threat model involves recently introduced APIs that allow organizations to train their own models using established federated learning protocols \citep{cason_api}. 
In this environment, a malicious API participant can change their model's architecture and parameters but cannot force unsuspecting edge devices to send user data directly.

We introduce minimal changes to model architectures that enable servers to breach user privacy, even in the face of large aggregations that have been previously deemed secure. These changes induce a structured pattern in the model update, where parts of the update contain information only about a fixed subset of data points. The constituent data points can then be recovered exactly, while evading existing aggregation defenses. For architectures that already contain large linear layers, the attack even works directly, modifying only the parameters of these layers.

\section{Limitations of Existing Attack Strategies} 
A range of possible attacks against privacy in federated learning have been proposed in recent literature. 
In the simplest case of \textit{analytic attacks}, \citet{phong_privacy-preserving_2017-1} were among the first to discuss that the input to a learnable affine function can be directly computed from the gradient of its weights and bias, and additional analysis of this case can be found in \citet{qian_what_2020,fan_rethinking_2020}, and in  \cref{sec:linear_recon}. However, analytic recovery of this kind only succeeds for a single data point. For multiple data points, only the average of their inputs can be recovered, leading the attack to fail in most realistic scenarios.

\textit{Recursive attacks} as proposed in \citet{zhu_r-gap_2021} can extend analytic attacks to models with more than only linear layers - a construction also mentioned in \cite{fan_rethinking_2020}. However, these attacks still recover only the average of inputs in the best case. Improvements in \citet{pan_theory-oriented_2020} transform linear layers with ReLU activations into systems of linear equations that allow for a degree of recovery for batched inputs to these linear layers, although preceding convolutional layers still have to be deconvolved by recursion or numerical inversion techniques.

Surprisingly, \textit{optimization-based attacks} turn out to be highly effective in inverting model updates. \citet{wang_beyond_2018} propose the direct recovery of input information in a setting where the users' model update is the model parameter gradient averaged over local data. In a supervised learning setting, we define this update by $g$ and the loss function over this model by $\mathcal{L}$ with model parameters $\theta$ and data points $(x, y) \in [0,1]^n \times \R^m$. The server can then attempt recovery by solving the gradient matching problem of 
\begin{equation}
    \min_{x \in [0,1]^n} ||\nabla_\theta \mathcal{L}(x, y, \theta) - g ||^2
\end{equation}
and solve this optimization objective using first-order methods or any nonlinear equation solver. Subsequent work in \citet{zhu_deep_2019,zhao_idlg_2020} and \citet{wainakh_user_2021} proposes solutions that also handle recovery of targets $y$ and variants of this objective are solved for example in \citet{geiping_inverting_2020-1} with cosine similarity and improved optimization and in \citet{jeon_gradient_2021} with additional generative image priors. Reconstruction of input images can be further boosted by additional regularizers as in \citet{yin_see_2021} and \citet{qian_minimal_2021}.

Most attacks in the literature focus on the described \textit{fedSGD} setting \citep{konecny_federated_2015} in which the users return gradient information to the server, but numerical attacks can also be performed against local updates with multiple local steps \cite{geiping_inverting_2020-1}, for example against \textit{fedAVG} \citep{mcmahan_communication-efficient_2017}. In this work, we will discuss both update schemes, noting that gradient aggregation in ``time", with multiple local update steps, is not fundamentally more secure than aggregation over multiple data points. Previous attacks also focus significantly on learning scenarios where the user data is comprised of images. This is an advantage to the attacker, given that image data is highly structured, and a multitude of image priors are known and can be employed to improve reconstruction.  In contrast, data types with weaker structure, such as tabular data, do not lend themselves to regularization based on strong priors, and we will show that our approach, on the other hand, does not rely on such tricks, and is therefore more data-agnostic.

The central limitation of these attack mechanisms is the degradation of attack success when user data is aggregated over even moderately large batches of data (either by the user themselves or by secure aggregation). State-of-the-art attacks such as \citet{yin_see_2021} recover only $28\%$ of the user data (given a charitable measure of recovery) on a batch size of $48$ for a ResNet-50 \citep{he_deep_2015} model on ImageNet (ILSVRC2012 \citep{russakovsky_imagenet_2015}) with unlikely label collisions. The rate of images that can be successfully recovered drops drastically with increased batch sizes. Even without label collisions, large networks such as a ResNet-32-10 \citep{zagoruyko_wide_2016} leak only a few samples for a batch size of 128 in \citet{geiping_inverting_2020-1}. These attacks further reconstruct only approximations to the actual user data which can fail to recover parts of the user data or replace it with likely but unrelated information in the case of strong image priors.


Further, although all of the previous works nominally operate under an  \textit{honest-but-curious} server model, they do often contain model adaptations on which reconstruction works especially well, such as large vision models with large gradient vectors,  models with many features \citep{wang_beyond_2018,zhu_r-gap_2021}, special activation functions \citep{zhu_deep_2019,zhu_r-gap_2021}, wide models \citep{geiping_inverting_2020-1}, or models trained with representation learning \citep{yin_see_2021,chen_improved_2020}. These may be seen as \textit{malicious} models with architectural choices that breach user privacy. In the same vein, we ask, what is the worst-case (but small) modification that can be applied to a neural network to break privacy?

\section{Model Modifications}\label{sec:method}
In this section, we detail an example of a small model modification that has a major effect on user privacy, even allowing for the direct recovery of verbatim user data from model updates.

\subsection{Threat Model}
We define two parties: the server $\mathcal{S}$ and the users $\mathcal{U}$. The server could be a tech company, a third party app using a federated learning framework on a mobile platform, or an organization like a hospital.  The server $S$ defines a model architecture and distributes parameters $\theta$ for this architecture to the users, who compute local updates and return them to the server. The server cannot deviate from standard federated learning protocol in ways beyond changes to model architecture (within limits imposed by common ML frameworks) and model parameters. We measure the strength of a malicious modification of the architecture using the number of additional parameters inserted into the model. While it is clear that models with more parameters can leak more information, we will see that clever attacks can have a disproportionate effect on attack success, compared to more benign increases in parameter count, such as when model width is increased.

\subsection{A Simple Example}
\label{sec:linear_recon}
To motivate the introduction of malicious modifications, we start with the simple case of a fully connected layer. 
A forward pass on this layer is written as $y = Wx + b$ 
where $W$ is a weight matrix, $b$ is a bias, and $x$ is the layer's input. As seen in \cite{phong_privacy-preserving_2017, qian_what_2020, fan_rethinking_2020}, when the parameters of the network are updated according to some objective $\mathcal{L}$, the $i^{th}$ row of the update to $W$: 
\[
\nabla_{W^i} \mathcal{L} = \frac{\partial \mathcal{L}}{\partial y^i} \cdot \nabla_{W^i} y^i = \frac{\partial \mathcal{L}}{\partial y^i} \cdot x,
\]
where we use the shorthand $\mathcal{L} = \mathcal{L}(x; W, b)$. Similarly, 
\[
\frac{\partial \mathcal{L}}{\partial b^i} = \frac{\partial \mathcal{L}}{\partial y^i} \frac{\partial y^i}{\partial b^i} = \frac{\partial \mathcal{L}}{\partial y^i}.
\]
So as long as there exists some index $i$ with $\frac{\partial \mathcal{L}}{\partial b^i} \neq 0$,
the single input $x$ is recovered perfectly as: 
\begin{equation}
\label{eq:recovery}
x = \nabla_{W^i} \mathcal{L} \oslash \frac{\partial \mathcal{L}}{\partial b^i}
\end{equation}
where $\oslash$ denotes entry-wise division. 

However, for batched input $x$, all derivatives are summed over the batch dimension $n$ and the same computation can only recover $ \sum_{t=1}^n\nabla_{W_l^i} \mathcal{L}_t \oslash \sum_{t=1}^n\frac{\partial \mathcal{L}_t}{\partial b_l^i}$ from each row where $\sum_{t=1}^n\frac{\partial \mathcal{L}_t}{\partial b_l^i}\neq 0$, which is merely proportional to a linear combination of the $x_t$'s. 
However, data points $x_t$ only appear in the average if $\frac{\partial \mathcal{L}_t}{\partial y^i_t}$ is non-zero, a phenomenon also discussed in \citet{sun_soteria_2021}. If $\mathcal{L}$ has a sparse gradient, e.g. in a multinomial logistic regression, then this structured gradient weakens the notion of averaging: Let $x$ be a batch of data with unique labels $1, \dots, n$. In this setting $\frac{\partial \mathcal{L}_t}{\partial y^i_t} = 0$ for all $i \neq t$, so that each row $i$ actually recovers
\begin{equation}
\label{eq:sparsebatch}
x_t = 
\sum_{t=1}^n \frac{\partial \mathcal{L}_t}{\partial y^i_t} x_t \oslash \sum_{t=1}^n \frac{\partial \mathcal{L}_t}{\partial y^i_t} = \frac{\partial \mathcal{L}_t}{\partial y^i_t} x_t \oslash \frac{\partial \mathcal{L}_t}{\partial y^i_t}.
\end{equation}
For a batch of $n$ data points with unique labels, we could thus recover all data points exactly for this multinomial logistic regression. We visualize this in Appendix \cref{fig:motivation_imagenet} for ImageNet data \citet{russakovsky_imagenet_2015} (image classification, 1000 classes), where we could technically recover up to 1000 unique data points in the optimal case. However, this setup is impractical and suffers from several significant problems:
\begin{itemize}
    \item \textbf{Averaging}: Multiple image reconstruction as described above is only possible in the linear setting, and with a logistic regression loss, a loss that depends on sparse logits is used. Even in this restrictive setting, reconstruction fails as soon as labels are repeated in a user update (which is the default case and outside the control of the server), especially if the accumulation size of a user update is larger than the underlying label space of the data. In this case, the server reconstructs the \emph{average} of repeated classes. In Appendix \cref{fig:motivation_imagenet}, we see that in the worst-case scenario where all data points fall into the same class, each piece of user data contributes to the gradient  equally, resulting in a mashup reconstruction that leaks little private information.
    \item \textbf{Integration}: As stated above, the naive reconstruction is only guaranteed to work only if the linear (single-layer) model is a standalone model, and not within a larger network. If the naive linear model was placed before another network, like a ResNet-18, then gradient entries for the linear layer contain elements averaged over all labels, as the combined network ostensibly depends on each output of the linear layer. 
    \item \textbf{Scalability}: on an industrial scale dataset like ImageNet, the naive logistic regression model would require $>150M$ parameters to retrieve an image from each label, which is of course far from any practical application.

\end{itemize}

\subsection{Imprinting User Information into Model Updates}
\label{sec:imprint}

Nonetheless, the perfect reconstruction afforded by the linear model remains an attractive feature. To this end, we introduce the \emph{imprint module} class of modifications which overcome the previously described issues, while maintaining the superior reconstruction abilities of an analytic reconstruction as described above. Further, the imprint module can be constructed from a combination of commonly used architectural features with maliciously modified parameters that can create structured gradient entries for large volumes of data.

The imprint module can be constructed with a single linear layer (with bias), together with a ReLU activation. Formally, let $ \{ x_i \}_{i=1}^n = X \in \mathbb{R}^{n \times m}$ be a batch of size $n$ of user data, then a malicious server can define an imprint module whose forward pass (on a single datapoint, $x$) looks like
\[ M(x) = f( W_* x + b_*),\]
where $f$ is a standard ReLU nonlinearity. The crux of the imprint module lies in the construction of $W_* \in \mathbb{R}^{k \times m}$ and $b_* \in \mathbb{R}^k$. We denote the $i^{th}$ row (or channel) of $W_*$ and the $i^{th}$ entry of $b_*$ as $W_*^i $ and $b_*^i $, respectively. We then construct $W_*^{(i)} $ so that
$$\langle W_*^i , x \rangle = h(x),$$
where $h$ is \emph{any} linear function of the data where the server can estimate the distribution of values $\{h(x)\}_{x\sim \mathcal{D}}$ of this function on the user data distribution. For example, if the user data are images, $h$ could be average brightness, in which case $W_*^i$ is simply the row vector with entries identically equal to $\frac{1}{m}$.  In order to define the entries of the bias vector, we assume that the server knows some information about the cumulative density function (CDF), assumed to be continuous for the quantity measured by $h$. Note this attack does \emph{not} require the server to know the full distribution of user data, but rather can estimate the distribution of \emph{some} scalar quantity associated with the user data (a much easier task). In Appendix \cref{fig:density}, we see this can be quite easy for a server, and can even be done with a small amount of surrogate data.
We also stress that the choice of $h$ here is not important to our method. For the purpose of explanation, we will assume that the quantity measured by $h$ is distributed normally, with $\mu=0$, $\sigma = 1$. Then, the biases of the imprint module are determined by
\[ 
b_*^i = - \Phi^{-1}(\frac{i}{k}) = -c_i,
\]
where $\Phi^{-1}$ is the inverse of the standard Gaussian CDF. In plain language, we first measure some quantity, like brightness, with the matrix $W_*$. We duplicate this measurement along the $k$ channels (rows) of $W_*$. In the meantime, we create $k$ ``bins" for the data corresponding to intervals of equal mass according to the CDF of $h$. Then, the measurement for a given datapoint will land somewhere in the distribution of $h$.

For example, consider the case when the brightness of some image $x_t$ lands between two values: $c_l \leq h(x_t) \leq c_{l+1} $, and no other image in the same batch has brightness in this range. In this situation, we say that $x_t$ alone activates bin $l$. Then, if the image $x_t$ is passed through the imprint module, we have 
\begin{equation}
\label{eq:relu_inversion}
\big( \nabla_{W_*^{l}}\mathcal{L} - \nabla_{W_*^{l+1}} \mathcal{L} \big) \oslash \bigg( \frac{\partial \mathcal{L}}{\partial b_*^l} - \frac{\partial \mathcal{L}}{\partial b_*^{l+1}} \bigg)  = x_t + \sum_{s=1}^p x_{i_s} - \sum_{s=1}^p x_{i_s} = x_t,
\end{equation}
where images $\{ x_{i_s} \}$ are images from the batch with brightness $>c_l$. That is, the difference in successive rows $l, l+1$ of the gradient entry for $W_*$ correspond to all elements with brightness $c_{l} \leq h(x) \leq c_{l+1} $ (in this case, assumed to be just $x_t$ - see \cref{ap:note} for remark). This is because all of $x_t \cup \{x_{i_s}\}$ activate the non-linearity for layer $l$, since all these images have brightness $\geq c_{l}$, however, only images $\{x_s\}$ have brightness $\geq c_{l+1}$, so only these images activate the non-linearity for layer $l+1$. An interesting biproduct of this setup is that it would be difficult even for hand inspection of the parameters to reveal the inclusion of this module as the server could easily permute the bins, and add random rows to $W_*$ which do not correspond to actual bins and only contribute to model performance. The gradient does not directly contain user data, so that the leak is also difficult to find by analyzing the gradient data and checking for matches with user data therein.

How successful will this attack be? 
Recall the parameter $k$ defined in the construction of the imprint module. This corresponds to the number of bins the malicious server can create to reconstruct user data. If a \emph{batch} of data is passed through the imprint module, depending on the batch size $n$ used to calculate the update sent to the server, and number of bins, $k$, the server can expect several bins to activate for \emph{only} one datapoint. And the corresponding entries of the gradient vector can be appropriately combined, and inverted easily. The following result quantifies the user vulnerability in terms of the number of imprint bins, $k$, and the amount of data, $n$, averaged in a given update.

\begin{proposition}
\label{prop:recovery}
If the server knows the CDF (assumed to be continuous) of some quantity associated with user data that can be measured with a linear function $h: \mathbb{R}^{m} \rightarrow \mathbb{R}$, then for a batch of size $n$ and a number of imprint bins $k>n>2$, by using an appropriate combination of linear layer and ReLU activation, the server can expect to exactly recover

\[
\frac{1}{\binom{k+n-1}{k-1}} \bigg[  \sum_{i=1}^{n-2} i \cdot \binom{k}{i} \cdot  \bigg( \sum_{j=1}^{\lfloor \frac{n-i}{2} \rfloor} \binom{k-i}{j} \binom{n-i-j-1}{j-1} \bigg) \bigg] + r(n,k)
\]
samples of user data (where the data is in $\mathbb{R}^{m}$) perfectly. Note: $r(n,k)$ is a correction term (see proof for full expansion). 
\end{proposition}

\begin{proof}
See \cref{ap:proof}.
\end{proof}

This can be thought of as a \emph{lower} bound on privacy breaches since, often, identifiable information can be extracted from a mixture of two images. Note that increasing the expected number of perfectly reconstructed images requires increasing the number of imprint bins, which in turn requires increasing the number of channels of the matrix $W_*$ and thus the number of parameters. Thus, to visualize the result above in terms of the server-side hyperparameter, $k$, we plot the expected proportion of data recovered as a function of number of bins in \cref{fig:proportion_theory}(a). Note that this inversion is analytic, and significantly more efficient and realistic than optimization based methods which often require tens of thousands of update steps.

\begin{figure}[h]
\centering
\includegraphics[width=0.45\textwidth]{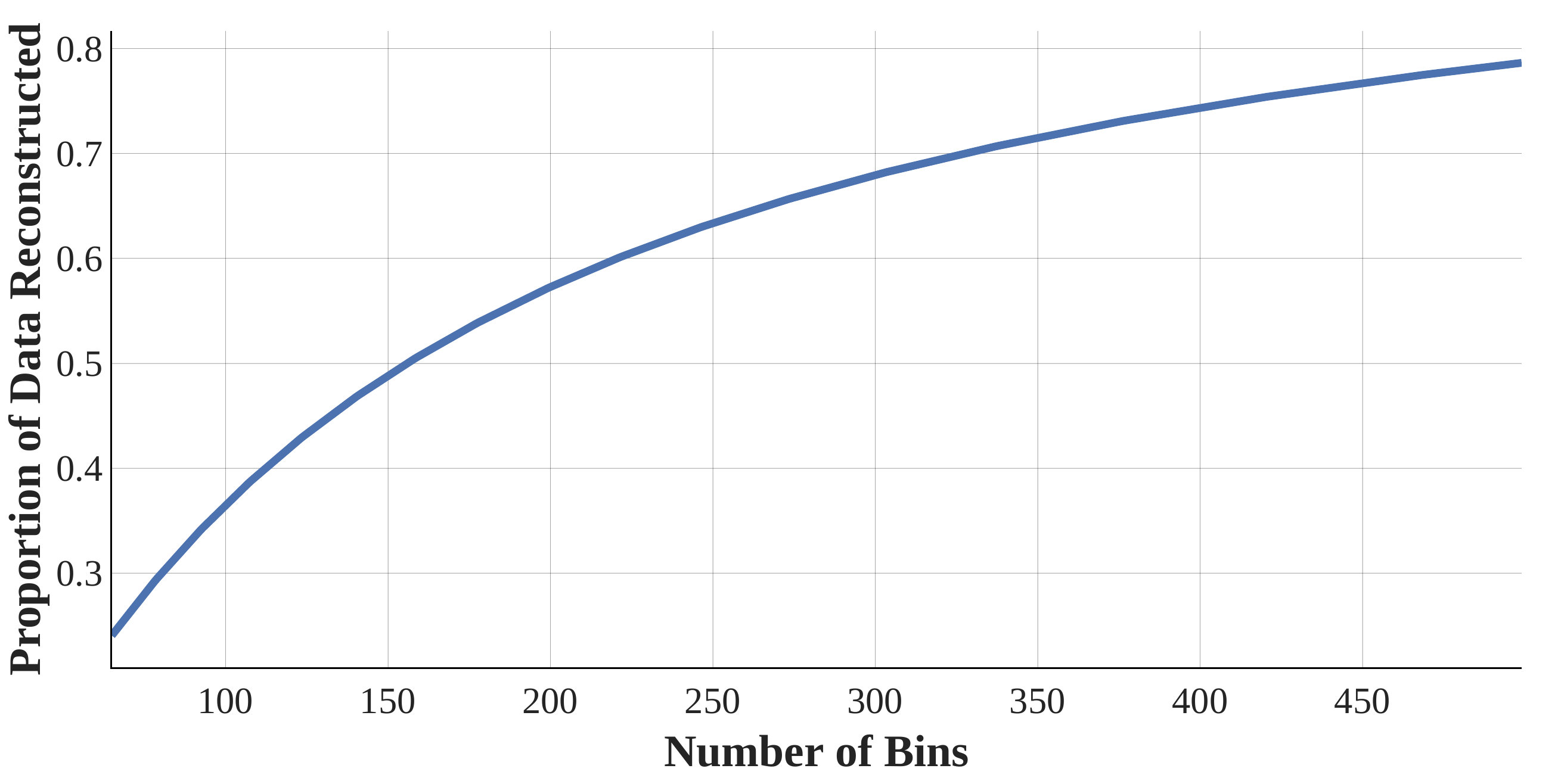}
\hfill
\includegraphics[width=0.45\textwidth]{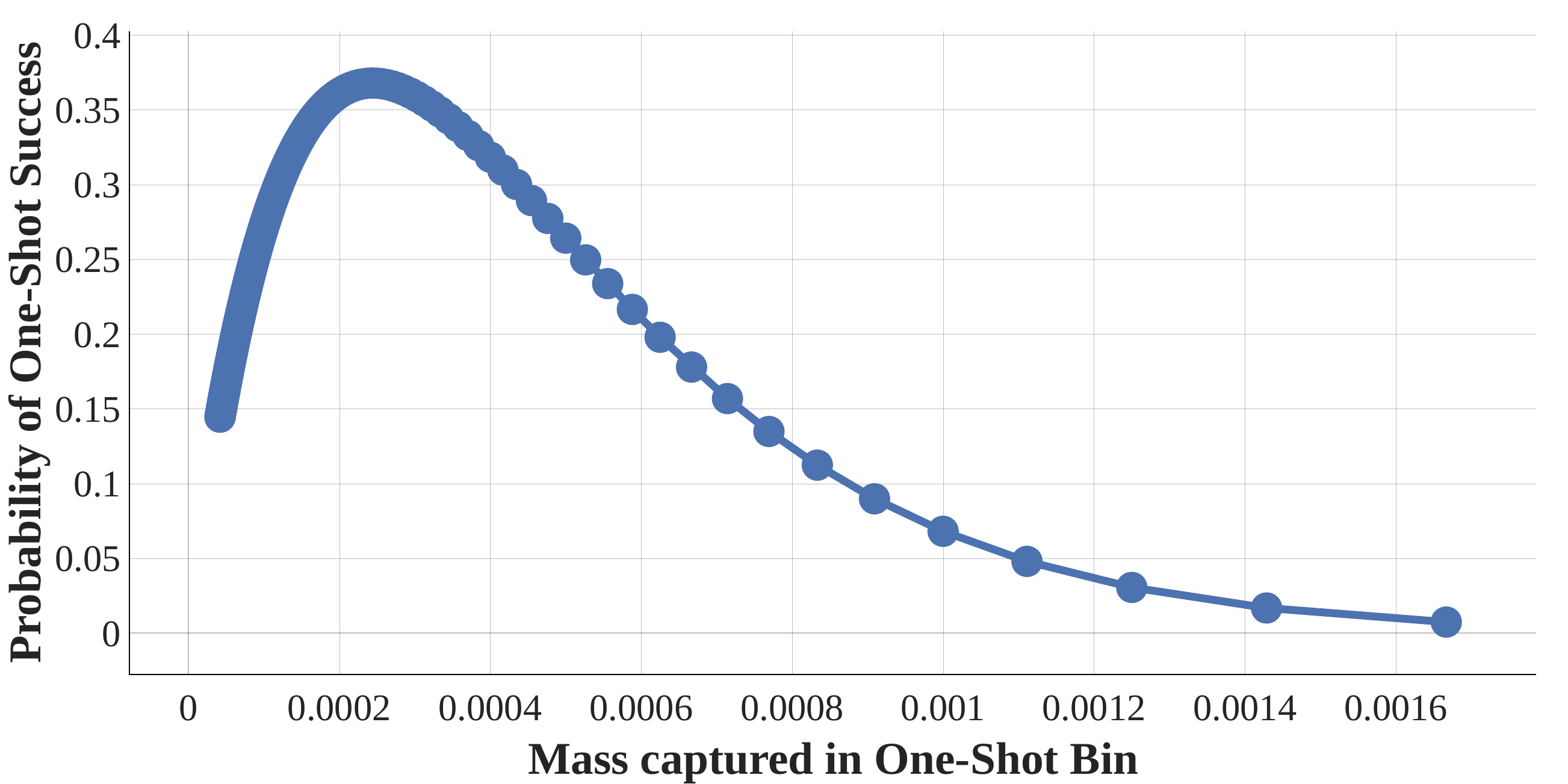}
\caption{\textbf{Left (a)}: Expected proportion of a batch of 64 images \emph{perfectly} recovered as a function of number of bins added via an imprint block in front of a ResNet-18 on ImageNet. With only 156 bins, an attacker can expect to recover over 50\% of a batch of user images perfectly. \textbf{Right (b)}: Probability of a successful ``one-shot" attack on a batch of 4096 images as a function of mass captured in the one-shot bin. An attacker can optimize their bin size given an expected batch size.}
\label{fig:proportion_theory}
\end{figure}

\begin{figure}

\centering
\begin{minipage}[c]{0.49\textwidth}
\centering
    \includegraphics[scale=0.1]{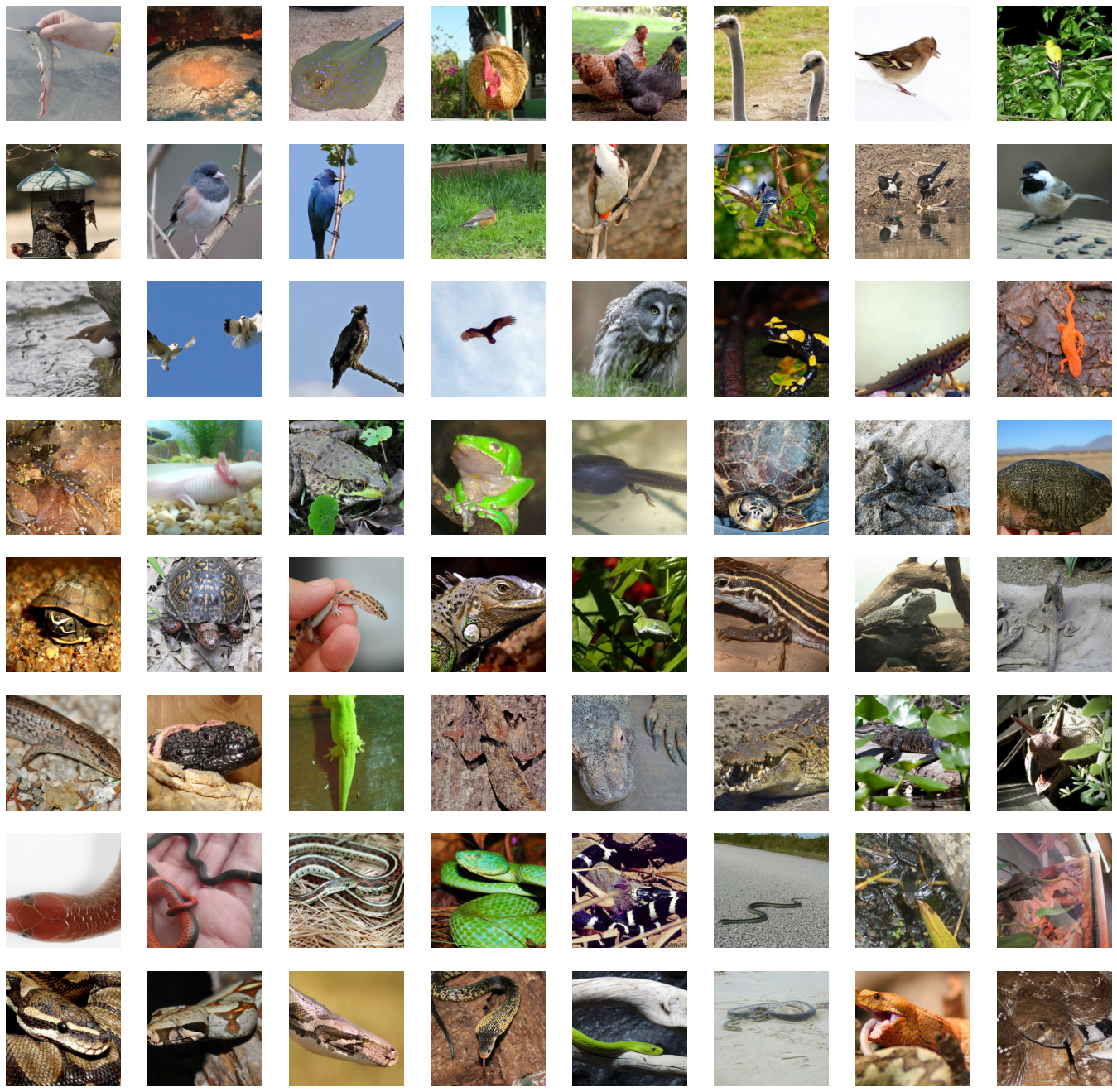}
\end{minipage}
\begin{minipage}[c]{0.49\textwidth}
\centering
    \includegraphics[scale=0.1]{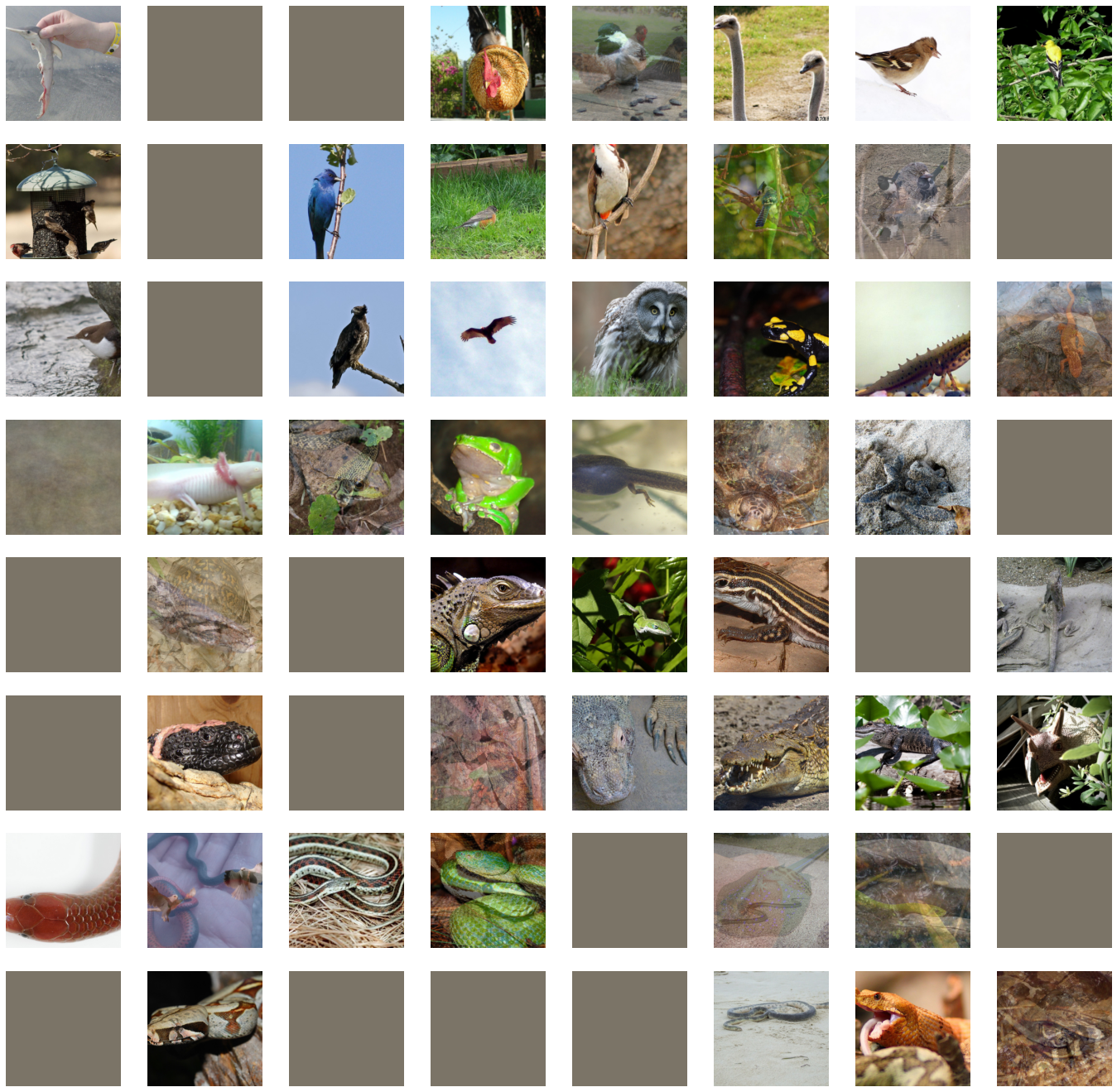}
\end{minipage}
\vspace{-4pt}
\caption{\textbf{Left:} Ground truth batch of $64$ user images. \textbf{Right:} Analytic reconstruction for an imprint model with $128$ bins in front of a ResNet-18. Gray reconstructions denote bins in which no data point falls.)}
\label{fig:standard_imagenet}
\end{figure}
\vspace{-5pt}

The imprint module as described can be inserted in any position in any neural network which receives a non-zero gradient signal. To recover the input feature dimension of the module, a second linear layer can be appended, or -- if no additional parameters are of interest -- the sum of the outputs of the imprint module can be added to the next layer in the network. The binning behavior of the imprint module is independent of the structure of succeeding layers as long as any gradient signal is propagated. This flexibility allows for wide trade-offs between inconspicuousness and effectiveness of the imprint module. The module can be placed in later stages of model whereas an early linear layer might be suspicious to observers (now that they have seen this trick), but depending on the data modality, early linear layers can be a feature of an architecture anyway, in which case there is even no model modification necessary, only parameter changes.
The parameter alterations necessary to trigger this vulnerability can furthermore be hidden from inspection until use. The layer can lay ``dormant", functioning and training as a normal linear layer initialized with random activations, as long as the server desires. At any point, a party with access to the server can send out parameter updates containing $W^*, b^*$ and trigger the attack.

\section{Experiments}
In the following section, we provide empirical examples of imprint modules. In all experiments we evaluate ImageNet examples to relate to previous work \citep{yin_see_2021}, but stress that the approach is entirely data agnostic. Given an aggregated gradient update, we always reconstruct as discussed in \cref{sec:method}. When, in the case of imprecision due to noise, more candidate data points are extracted than the expected batch size, only the candidates with highest gradient mean per row in $W^*$ are selected and the rest discarded. For analysis, we then use ground-truth information to order all data points in their original order (as much as possible) and measure PSNR scores as well as Image Identifiability Precision (IIP) scores as described in \citet{yin_see_2021}. For IIP we search for nearest-neighbors in pixel space to evaluate a model-independent distance - a more strict metric compared to IIP as used in \citet{yin_see_2021}. All computations run in single floating point precision.

\subsection{Full batch recovery}
We begin with a straightforward and realistic case as a selling point for our method - user gradient updates aggregated over a batch of 64 ImageNet images. We modify a ResNet-18 to include an imprint module with $128$ bins in front. This relatively vanilla setup presents is a major stumbling block for optimization based techniques. For example, for a much smaller batch size of $8$, the prior art in optimization based batched reconstruction achieves only $12.93$ average PSNR \citep{yin_see_2021}. We do of course operate in different threat models (although \citet{yin_see_2021} also uses non-obvious parameter modifications). However, our imprint method is successfully able to recover almost perfect reconstructions of a majority of user data (see \cref{fig:standard_imagenet}), and achieves an average PSNR of $75.75$. We further stress that a batch size of $64$ is by no means a limitation of the method. If bins, and hence additional rows in $W^*$ are added proportionally to the expected batch size, then recovery of significant proportions (see \cref{fig:proportion_theory}) of batches of arbitrary size -- albeit with the incurred cost in additional parameters for each row -- is possible. Note that concurrent work, \citet{boenisch2021curious} also proposes a method to recover user data from large-batch updates by malicious modifications - operating in the same setting/threat model as our work . However, in a direct comparison, we find that our imprint module presents a much more significant threat to privacy compared to \citet{boenisch2021curious} - see Appendix \cref{fig:cah_comparison}. 

\subsection{Privacy breaches in industrial-sized batches -- One-shot Attacks}
A breach in privacy can occur if even a \emph{single} piece of user data is compromised. Unfortunately, there is a threatening modification of the imprint module for exactly such an attack. If a server has access to enough users, then it becomes feasible for the server to start \textit{fishing} for private data among all  updates, and attempt to recover a single data point from each incoming batch of data.
Attacks of this nature require only as many additional parameters as twice the size of a single piece of targeted user data, as only \emph{two} bins are needed. For this, $k$ bins are constructed initially, and then all bins are ``fused" to create $2$ final bins: a one-shot bin containing mass $n/k$, and the other containing the remaining mass $(n+1)/k$. For perspective, for a ResNet-18 on ImageNet, this would require only an additional $1\%$ of parameters - this is tiny compared to potentially massive increases in the number of parameters when no bins were fused, which could raise suspicion under inspection. Further, based on \cref{prop:recovery}, it is always possible to select an optimal bin size, so that a data point is leaked on average once every four batches of incoming data, \textit{no matter how large the batch size}. 
We demonstrate this statistical property by recovering a single image from an aggregated batch of $2^{14}=16,384$ ImageNet images (see \cref{fig:one_shot}). Even though the gradient updates are averaged over a vast number of data points, there can be no perfect privacy, and one data point is leaked in its entirety by only a minor model modification.

\begin{figure}
\centering
\begin{minipage}[c]{0.49\textwidth}
\centering
    \includegraphics[scale=0.4]{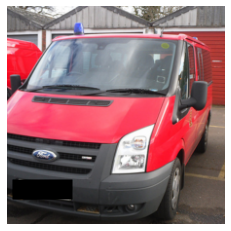}
\end{minipage}
\begin{minipage}[c]{0.49\textwidth}
\centering
    \includegraphics[scale=0.4]{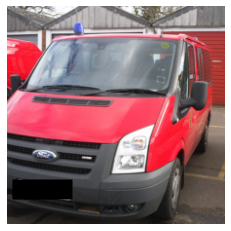}
\end{minipage}
\vspace{-8pt}
\caption{\textbf{Left (a)}: A true user image from class ``minibus". \textbf{Right (b)}: The reconstructed image from class ``minibus" captured via our one-shot attack from averages aggregated over \textbf{16,384 datapoints}. The PSNR is $161.36$, i.e. a verbatim copy at machine precision. This user could potentially be identified via their recovered license plate, which was blanked out (by us!) to preserve privacy.
}
\label{fig:one_shot}
\end{figure}

\subsection{Variants}

\textbf{Flexible placement} The imprint module does not depend on its placement within a given model. No matter its position in a network, the incoming input features will be leaked to the server. Furthermore, the server can also change the parameters of preceding layers to represent (near)-identity mappings, allowing for the recovery of raw input data from inconspicuous positions deep in a network. For a convolutional network, we show off this variant in \cref{fig:placement} for a ResNet-18. Here, the model parameters are manipulated to contain identity maps up to the location of the imprint module, while the downsampling operations remain. Even these later layers leak enough information that their inputs can be upsampled to breach privacy of the inputs. We remark that we show a simplified version of this attack here where the first three channels in each layer act as an identity, and all other channels are zero, but the model can also be modified to provide off-set pixels in all other channels, effectively increasing the spatial resolution in any layer proportionally with the number of channels.

\textbf{Multiple local updates:} In several federated learning protocols, such as fedAVG, users take several local update steps on data before sending model updates to the server \cite{konecny_federated_2015}. The imprint module is threatening when a large amount of user data is used for a \emph{single} update step. In this case, the only variable that matters is amount of total data used in the update. That is to say, $10$ users sending updates on $100$ datapoints each is equivalent (recovery-wise) to a single user sending an update calculated on $1000$ datapoints. However, when multiple steps are taken, inverting gradients from pairwise differences (as in \cref{eq:relu_inversion}) becomes more difficult, as entries in $W^*$ shift with local updates. However, an imprint variant that produces sparse gradients per data point is a threat to such federated averaging. Defining a forward pass in this new variant as $M'(x)$ as $M'(x) = g(W_*x + b_*) $
where the non-linearity $g$ is a thresholding function: 
\[
g(t) = 
 \begin{cases} 
      0 & t\leq 0 \\
      t & 0\leq t\leq 1 \\
      1 & 1\leq t 
   \end{cases}
\]
Note this non-linearity can be simply constructed with a combination of two ReLUs, or with an implementation of a Hardtanh. We now define $W_*^i$ as: 
$\langle W_*^i , x \rangle = \frac{h(x)}{\delta_i}$
where $h$ is the a linear function of the data as described before, and the biases are defined as:
$$
b_*^i = -\frac{c_i}{\delta_i} \quad \text{where} \quad \delta_i = \Phi^{-1}(\frac{i+1}{k}) - \Phi^{-1}(\frac{i}{k}).
$$

This setup creates sparse bins where inversion is directly possible (without taking pairwise differences) in gradient entries, at the cost of an additional activation layer. Analyzing a local update step with $W_*^{i,j}$ as the $i^{th}$ row of $W_*$ at update step $j$, reveals that 
$$W_*^{i,j} = W_*^{i,j-1} -\alpha \frac{\partial \mathcal{L}}{\partial a^{i,j}} x_{j}$$
where $x_j$ denotes the data from the previous batch that activated bin $i$, and $a^{i,j}$ denotes the $i^{th}$ activation at step $j$, and $\alpha$ is the local learning rate. Note that if either a linear layer, or a convolutional layer follows this imprint module, then $\frac{\partial \mathcal{L}}{\partial a^{i,j}}$ does not depend on the scale of $W_*^{i,j}$. Therefore, a simple way to increase the effectiveness of the new imprint module, $M'$ in the fedAVG case is to scale the linear function associated to the rows of $W_*$ - i.e. $h'(x) = c_0 \cdot h(x)$. This ``flattens" the distribution of values, and increases the relative size of $b_*^{i,j}$ compared to the gradient update $\frac{\partial \mathcal{L}}{\partial a^{i,j}}$, which prevents the bins from shifting too significantly during local updates. As bin shift goes to $0$, we recover the situation where the only variable that matters is the total data used in an update.

With this modification, reconstruction quality remains similar to the fedSGD setting. For example, splitting the batch of 64 ImageNet images up into 8 local updates with learning rate $\tau=1e-4$ yields an IIP score of $70.31\%$, due to minor image duplications where images hit multiple shifted bins, which we visualize in Appendix \cref{fig:fedavg}.  
\begin{figure}
    \centering
    \includegraphics[width=0.49\textwidth]{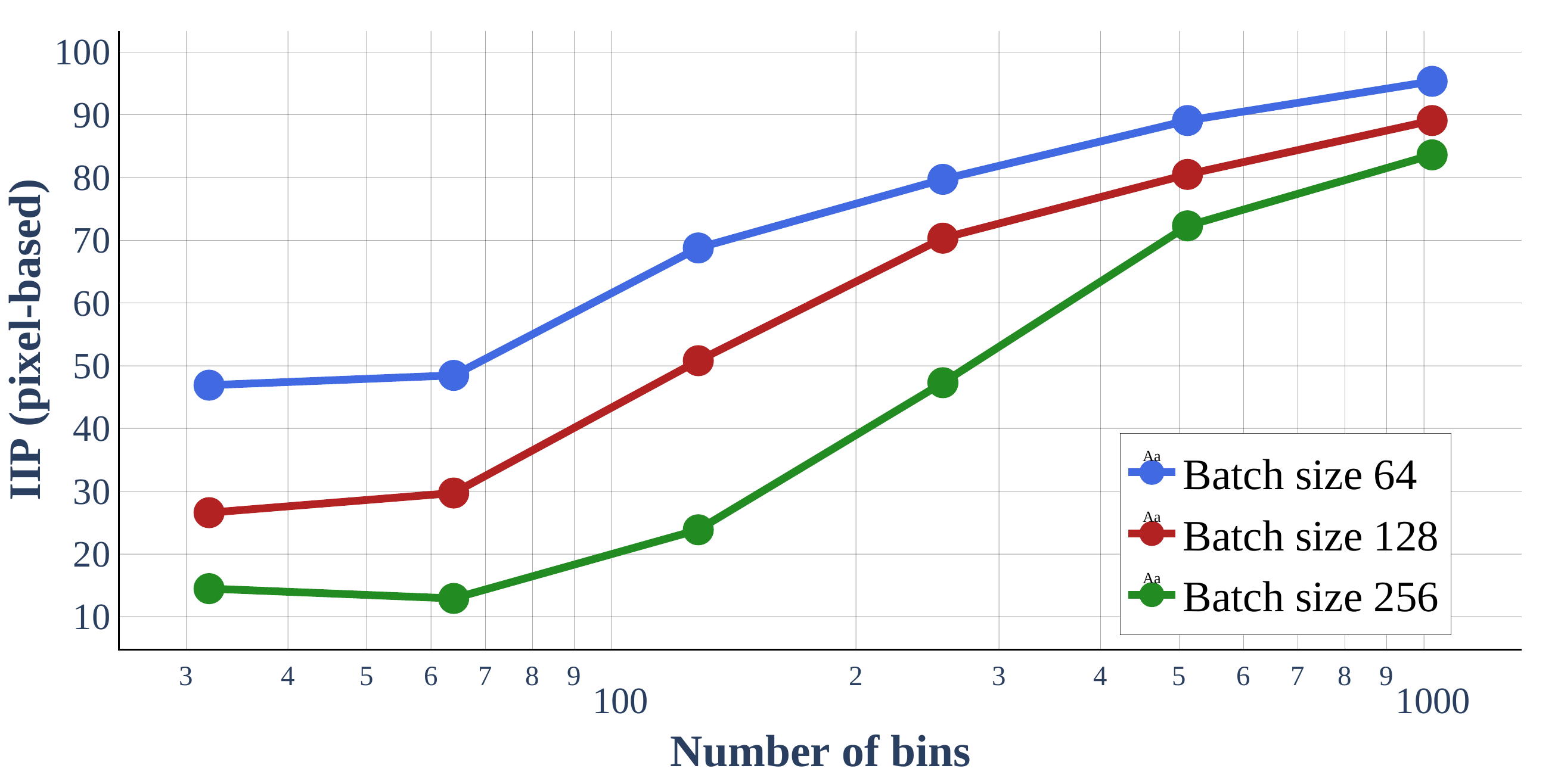}
    \includegraphics[width=0.49\textwidth]{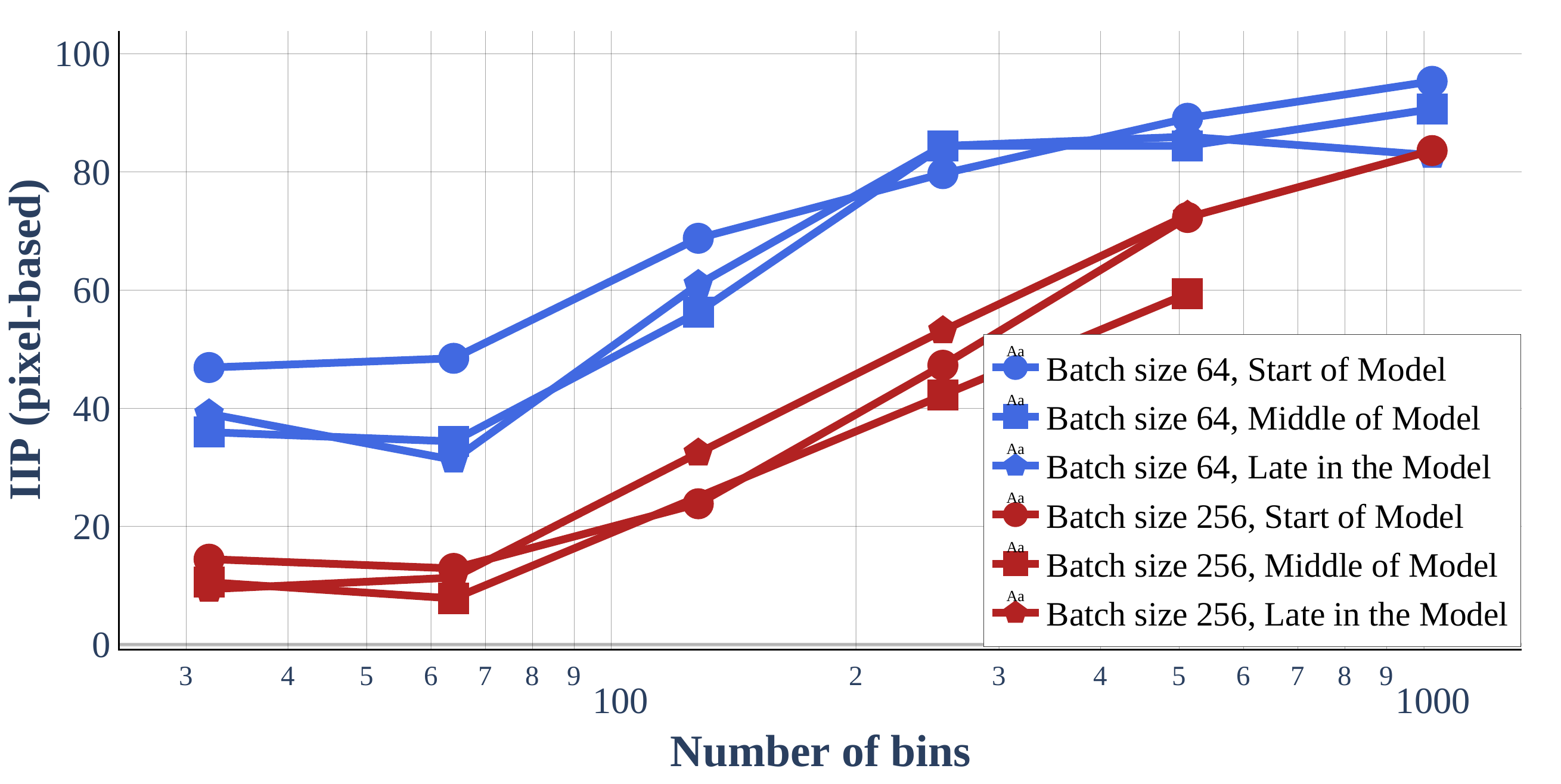}
    \vspace{-8pt}
    \caption{\textbf{Left:}  Identification Success vs bin size. \textbf{Right:} Identification Success (via IIP score) vs. bin size and position in a ResNet-18 model. We find that the attack is stable over a range of batch sizes and positions in a model.}
    \label{fig:batchsizes}
    \vspace{-8pt}
\end{figure}

\textbf{Other data modalities:}
Yet another advantage to our imprint module over existing optimization based gradient inversion techniques is the flexibility in data domain. Other techniques have demonstrated some success in the image domain by leveraging strong regularizers including total variation (TV), image registration, matching batch norm statistics, and DeepInversion priors \citep{geiping_inverting_2020-1, yin_see_2021}. Such strong regularizers do not always exist in other domains of interest, such as text or tabular data. The discussed imprint module, however, is data-agnostic, and while we focus our experiments on the image domain, nowhere do we use any assumptions unique to vision. In fact, linear layers often appear in language models, and tabular data models - cases in which the attacker only needs to modify parameters of an existing model to breach user privacy \citep{vaswani_attention_2017, somepalli2021saint} without architecture modifications. 

\section{Potential Defense and Mitigation Strategies}
If aggregation is the only source of security in a FL system, then the proposed attack breaks it, uncovering samples of private data from arbitrarily large batches, especially via the One-shot mechanism. In light of this attack, the effectiveness of secure aggregation is reduced to only \textit{secure shuffling} \citep{kairouz_advances_2021}: When private data is uncovered via the imprint module, based on data that has been securely aggregated, then the data is breached, but is not directly connected to any specific user (aside from possible revealing information in the data itself). A mitigation strategy for users that does not require coordination (or consent) of a central server is to employ local differential privacy \citep{dwork_algorithmic_2013}. Adding sufficient gradient noise can be a defense against this attack as the division in \cref{eq:recovery} leads to potentially unbounded errors in the scale of the data. Yet, in practice, privacy is often still broken even if the correct scale cannot be determined, so that the amount of noise that has to be added is large. In Appendix \cref{fig:gradnoise}, even with $\sigma=0.01$, private data is visibly leaked. Additional discussion on defenses can be found in  \cref{ap:defenses}

\section{Conclusions}
Federated learning offers a promising avenue for training models in a distributed fashion. However, the use of federated learning, even with large scale averaging, does not guarantee user privacy. Using common and inconspicuous machine learning modules, a malicious server can breach user privacy by sending minimally modified models and parameters in a federated setup. 
We hope that constructing these examples clarifies current limitations and informs discussions on upcoming applications, especially concerning API design.



\section*{Ethics Statement}
In this work, we uncover an attack on federated learning that has the potential to compromise user privacy. While this method has the potential to be used for malicious purposes, the fundamental purpose of this research is to inform the community about the state of privacy in federated learning, and to help users and technical experts better understand the limitations of federated learning for the purpose of preserving user privacy. 


\section*{Reproducibility Statement}
We provide additional technical details and the proof \cref{prop:recovery} in the appendix. Further, we provide open-source implementations of all attacks investigated in this work. The repository at \url{https://github.com/lhfowl/robbing_the_fed} shows a minimalistic implementation of the principles of this attack and the repository at \url{https://github.com/JonasGeiping/breaching} embeds the attack in a larger framework of privacy attacks against federated learning. The experiments in this work require no GPU resources and can be cheaply evaluated on almost any machine with sufficient RAM.

\section*{Acknowledgements}
This work was supported by DARPA GARD, the Office and Naval Research,  and the National Science Foundation Division of Mathematical Sciences. Addition support was provided by the Sloan Foundation, JP Morgan Chase, and Capital One Bank.

\newpage 
\bibliography{iclr2022_conference,zotero_library} 
\bibliographystyle{iclr2022_conference}

\newpage

\appendix
\section{Appendix}
\subsection{Proof of Proposition 1}
\label{ap:proof}
\begin{proposition*}
If the server knows the CDF (assumed to be continuous) of some quantity associated with user data that can be measured with a linear function $h: \mathbb{R}^{m} \rightarrow \mathbb{R}$, then for a batch size of $n$, and a number of imprint bins $k>n>2$, by using an appropriate combination of linear layer and ReLU activation, the server can expect to recover

\[
\frac{1}{\binom{k+n-1}{k-1}} \bigg[  \sum_{i=1}^{n-2} i \cdot \binom{k}{i} \cdot  \bigg( \sum_{j=1}^{\lfloor \frac{n-i}{2} \rfloor} \binom{k-i}{j} \binom{n-i-j-1}{j-1} \bigg) \bigg] + r(n,k)
\]
amount of user data (where the data is in $\mathbb{R}^{m}$) perfectly. Note: $r(n,k)$ is a correction term (see proof for expression). 
\end{proposition*}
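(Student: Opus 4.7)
The plan is to reduce perfect recovery to a combinatorial count of ``singleton'' bins (bins containing exactly one data point), since a data point is recovered exactly if and only if it is the sole occupant of its bin.

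First I would describe the attack. Using the known continuous CDF of $h$, place bin thresholds at the $1/k, \ldots, (k-1)/k$ quantiles and construct $W_* \in \R^{k \times m}$ together with biases so that the $j$-th ReLU unit activates precisely when $h(x)$ exceeds the $j$-th threshold. The row-subtraction trick established in the preceding subsection then produces, for each bin, the vector sum of those data points whose $h$-value falls in that bin. A data point is thus perfectly recovered exactly when it is the unique occupant of its bin; multi-occupancy bins only yield an aggregate sum, which is not invertible without additional structure on the data.

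Next I would count. Bin-occupancy patterns correspond to weak compositions of $n$ into $k$ non-negative parts, giving $\binom{k+n-1}{k-1}$ total configurations, the denominator in the statement. For fixed $i$ (number of singleton bins) and $j$ (number of ``multi-bins'', meaning bins with at least two data points), I count configurations as $\binom{k}{i}\binom{k-i}{j}\binom{n-i-j-1}{j-1}$: the first factor selects the singleton bins, the second selects the multi-bins from the remaining $k - i$ slots, and the third counts distributions of the $n - i$ leftover data points into the $j$ multi-bins with each receiving at least two. The last factor is a stars-and-bars count after the substitution size $\mapsto$ size $-2$, which also forces $j \leq \lfloor (n-i)/2 \rfloor$. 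Multiplying by $i$ (the number of recovered points in such a configuration) and summing over $(i, j)$ produces the inner double sum.

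Finally I would handle the boundary cases, where I expect the main bookkeeping effort to lie. The case $i = n-1$ contributes nothing, since a single leftover point cannot fill a bin of size at least two. The case $i = n$ (all singletons, $j = 0$) falls outside the inner sum, which starts at $j = 1$, and so must be added separately as the correction $r(n, k) = n\binom{k}{n}/\binom{k+n-1}{k-1}$. The subtler points to verify are that $\lfloor(n-i)/2\rfloor$ is exactly the right upper bound on $j$ and that the stars-and-bars count degenerates as expected at the edges; and to be explicit about the probabilistic model underlying ``expect to recover'', namely the uniform measure on weak compositions of $n$ into $k$ bins that is implicit in the normaliser $\binom{k+n-1}{k-1}$ appearing in the statement.
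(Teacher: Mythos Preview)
Your combinatorial decomposition is essentially identical to the paper's: same split by the number $i$ of singleton bins and the number $j$ of bins with occupancy $\geq 2$, same stars-and-bars count $\binom{n-i-j-1}{j-1}$ obtained via the shift $x_l\mapsto x_l-2$, and the same boundary observations at $i=n-1$ (no contribution) and $i=n$ (handled separately). You also correctly make explicit that the denominator $\binom{k+n-1}{k-1}$ commits the statement to the uniform measure on weak compositions, which the paper uses without flagging.

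The one genuine gap is your expression for the correction term. The paper sets
\[
r(n,k)=\frac{n}{\binom{k+n-1}{k-1}}\binom{k}{n}\;-\;\frac{n}{k},
\]
and the extra $-n/k$ is not edge-case bookkeeping from the $i=n$ summand but a separate phenomenon of the ReLU construction itself: there is a tail of the CDF of mass $1/k$ that is not covered by any bin, so an expected $n/k$ data points activate no unit and are unrecoverable regardless of what happens in the occupied bins. Your description of the attack (thresholds at the $1/k,\ldots,(k-1)/k$ quantiles, which is only $k-1$ thresholds for $k$ rows of $W_*$) implicitly assumes the bins tile the whole line, and so you never see this loss. To match the paper you need either to adopt its threshold placement and subtract the expected tail count, or to argue explicitly that your alternative placement eliminates the tail, in which case you would be proving a slightly different (and slightly stronger) statement than the one given.
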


\begin{proof}
By construction of the imprint module, given a random sample (batch) $X_1, \dots, X_n$ (iid) the server perfectly recovers data whenever an imprint bin has exactly $1$ element of the batch. Because we know the CDFs, we can create partitions of equal mass corresponding to imprint bins $\{b_j\} = \{[a_j, b_j]\}$ where $P(X_i \in [a_j, b_j]) = 1/k$ $\forall i,j$. 

We can then phrase the problem of expected number of perfectly recovered samples as a modified ``stars and bars" problem. For a given batch of data, to calculate the amount of data recovered, we first calculate:
\[
\sum_{i=1}^n i \cdot \binom{k}{i} \cdot N_i
\]
Where $\binom{k}{i}$ is the number of ways to select the $i$ bins that have exactly $1$ element, and $N_i$ is the number of orientations of the remaining data into the remaining bins so that no bin has exactly $1$ element. Note that we can do this because the bins all have equal mass, and thus we can factor out the (uniform) probability of any configuration from the sum. 

Simply put, we first take the configuration where there is only $1$ bin with exactly $1$ element, and weight it by $1$, then we take the number of configurations with $2$ bins with exactly 1 element, and weight it by $2$, and so on. 

In order to calculate $N_i$, we notice that in our construction, once the $i$ bins with exactly $1$ element are chosen, every other bin has either $0$ or $\geq 2$ elements. We focus on the bins that have $\geq 2$ elements. By a simple ``reverse" pigeon hole argument, we can now have at most $\lfloor \frac{n-i}{2} \rfloor$ of the remaining bins containing any elements, as otherwise, one bin would be guaranteed to contain exactly $1$ element. 

So we further select any $1 \leq j \leq \lfloor \frac{n-i }{2} \rfloor$ number of the remaining $k-i$ bins all to contain at least $2$ elements. Formally, this is equivalent to calculating the number of orientations of integers $\{x_{l}\}_{l=1}^j$

so that
\[ 
x_{1} + \dots + x_{j} = n - i 
\]
constrained with $x_{l} \geq 2\ \forall l$ 

Now, we make a change of variables to instead calculate the number of orientations of integers \[
\{p_{l}\}_{l=1}^{j}
\]
so that
\[ 
p_{1} + \dots + p_{j} = n - i - 2j
\]
constrained with $p_{l} \geq 0\ \forall l$. Now we just have a ``stars and bars" problem with $k' = j$ bars and $n' = n-i-2j$ stars. This reduces to: 
\[
\binom{n-i-j-1}{j-1}
\]
orientations for the remaining bins with exactly $j$ elements. Once these $i$ bins with $1$ element, and $j$ bins with $\geq 2$ elements are chosen, all the other bins are required to have $0$ elements. 

So adding these parts together, we have the expected amount data the server can expect to reconstruct perfectly becomes: 

\[
\frac{1}{\binom{k+n-1}{k-1}} \bigg[  \sum_{i=1}^{n-2} i \cdot \binom{k}{i} \cdot  \bigg( \sum_{j=1}^{\lfloor \frac{n-i}{2} \rfloor} \binom{k-i}{j} \binom{n-i-j-1}{j-1} \bigg) \bigg] + \overbrace{\frac{n}{\binom{k+n-1}{k-1}} \binom{k}{n} - \frac{n}{k}}^{r(n,k)}
\]

We call the last two ``residual" terms $r(n,k)$. The first of these terms corresponds to the term in the expectation where all elements of the batch end up in separate bins, and the second term is the expected number of elements that land in the tail of the CDF not covered in any bin.  
\end{proof}

\subsection{Other Choices of Linear Functions and Distributions}
In previous experiments we have restricted our investigations to the linear function $h:\R^m \to \R$ that measures average brightness, i.e. $h(x) = \frac{1}{m} \sum_{i=1}^m x_i$ which we approximate to be normally distributed. Given that ImageNet (and this also applies to most image datasets) is pre-processed by normalization by color in each channel, and that the number of pixels is large and they are not perfectly correlated, this is a reasonable approximation based on the central limit theorem that could similarly apply to other data modalities as well. For analysis, we visualize the closeness of this approximation based on an evaluation over the full ImageNet validation set in \cref{fig:distributions}. 

We verify that the actual ground truth distribution can be approximated by a normal distribution, but we also see that the approximation is imperfect. The attack works well in \cref{fig:standard_imagenet} even with this discrepancy, however it could be further improved if the attacker has more accurate about the CDF. Image brightness is better described by a Laplacian distribution \cite{ruderman_statistics_1994,huang_statistics_1999}. Replacing the normal distribution by a Laplacian distribution with scale $1/\sqrt{2}$ does improve the accuracy slightly. This distribution can be further stabilized by considering higher frequencies compared to the mean, e.g. via DCT coefficients \cite{lam_mathematical_2000,huang_statistics_1999}. We accordingly also visualize this distribution for e.g. the 32nd DCT coefficient in \cref{fig:distributions} and use this cosine wave for the imprint module (with scaling factor $\frac{4}{m} f$. This leads to the strongest attack against image data, but of course utilizes attacker knowledge that the users train on natural images. 

On the flip side, the estimation can also be improved by replacing the linear function $h$ with a Gaussian random vector of independent draws from $\mathcal{N}(0, \frac{1}{\sqrt{m}})$. The resulting distribution (4th figure in \cref{fig:distributions}) approximates a normal distribution much better. While not as optimal as the Laplacian distribution for higher frequencies, this variant is applicable for other data modalities if the data has bounded variance. Visualizations of the reconstruction with other linear functions can be found in \cref{fig:standard_imagenet_more_cdf_knowledge}.

Finally, even with a small amount of data, the server could estimate the density of the quantity of interest. Visually, we plot the the estimated density as for several amounts of ImageNet data used to estimate the brightness distribution. We find that even with $0.1\%$ of the data used, the server could obtain a close approximation to the distribution of interest (see \cref{fig:density}).

\begin{figure}[h!]
    \centering
    \includegraphics[scale=0.5]{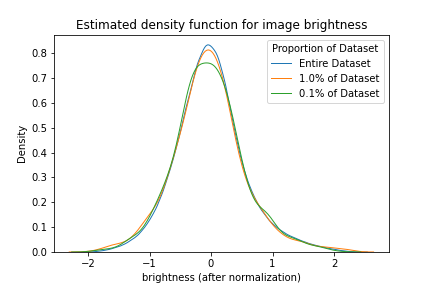}
    \caption{Density of image brightness estimated from access to different amounts of data from the ImageNet dataset.}
    \label{fig:density}
\end{figure}


\begin{figure}
    \centering
    \includegraphics[width=0.24\textwidth]{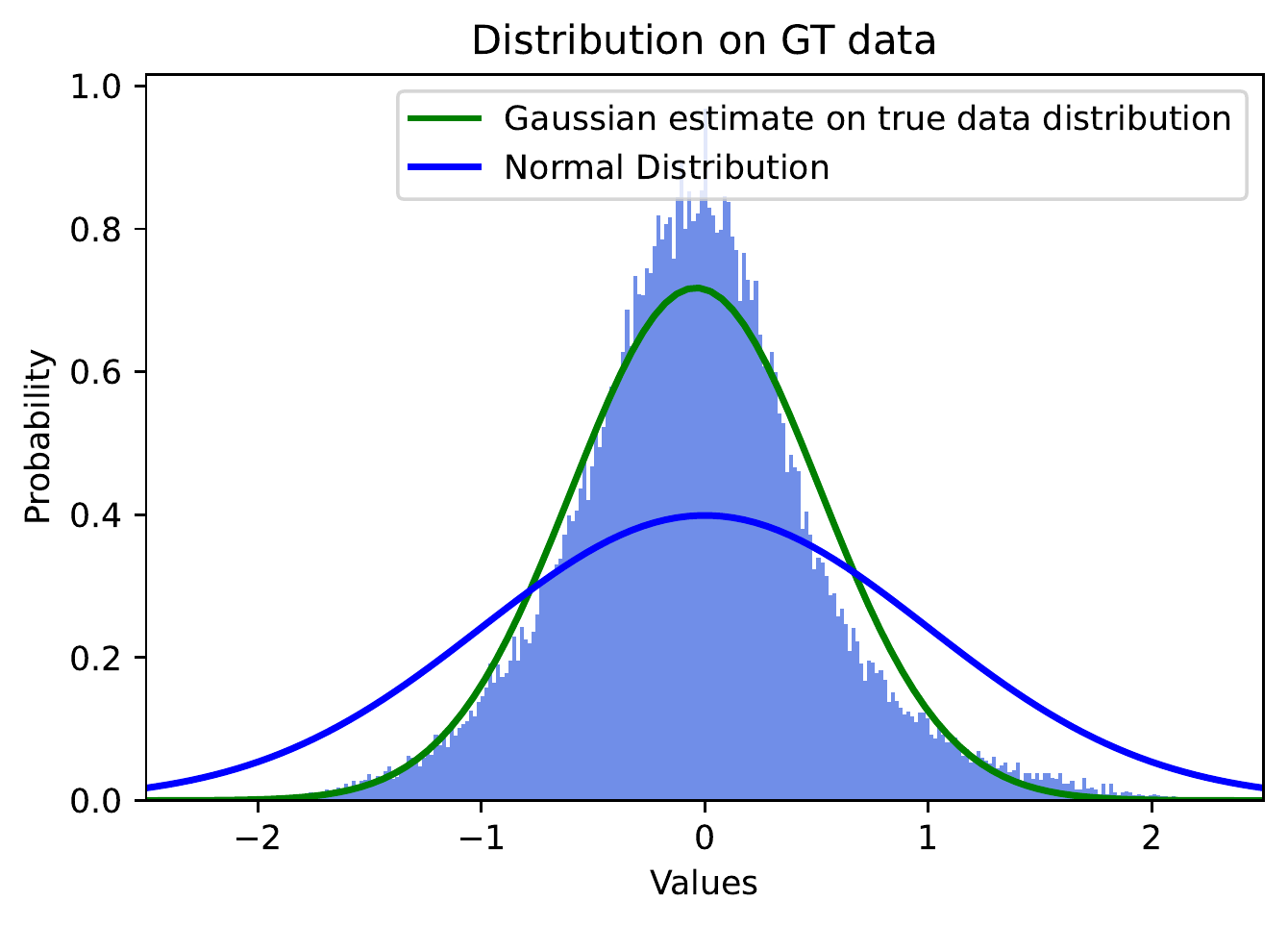}
    \includegraphics[width=0.24\textwidth]{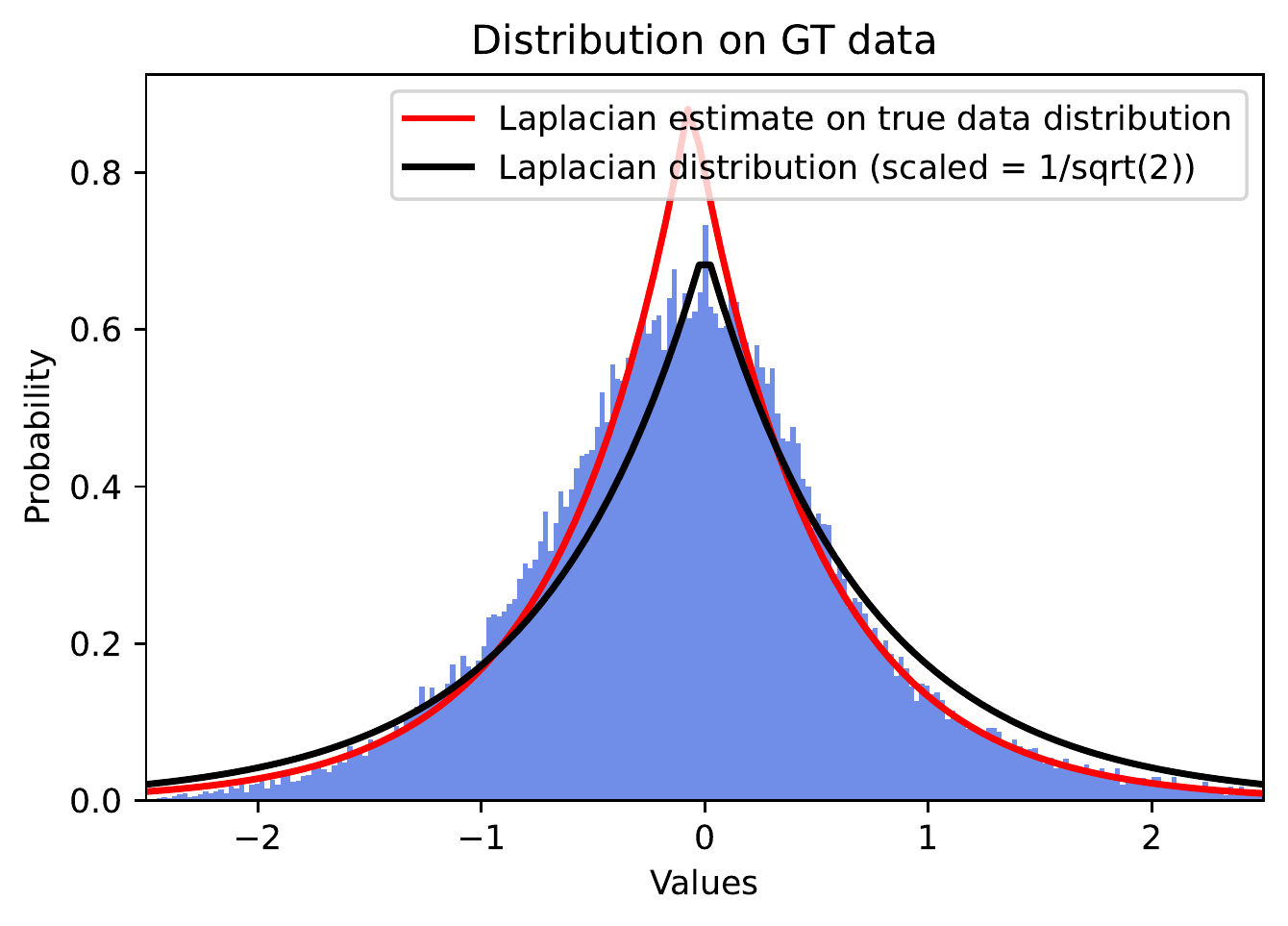}
    \includegraphics[width=0.24\textwidth]{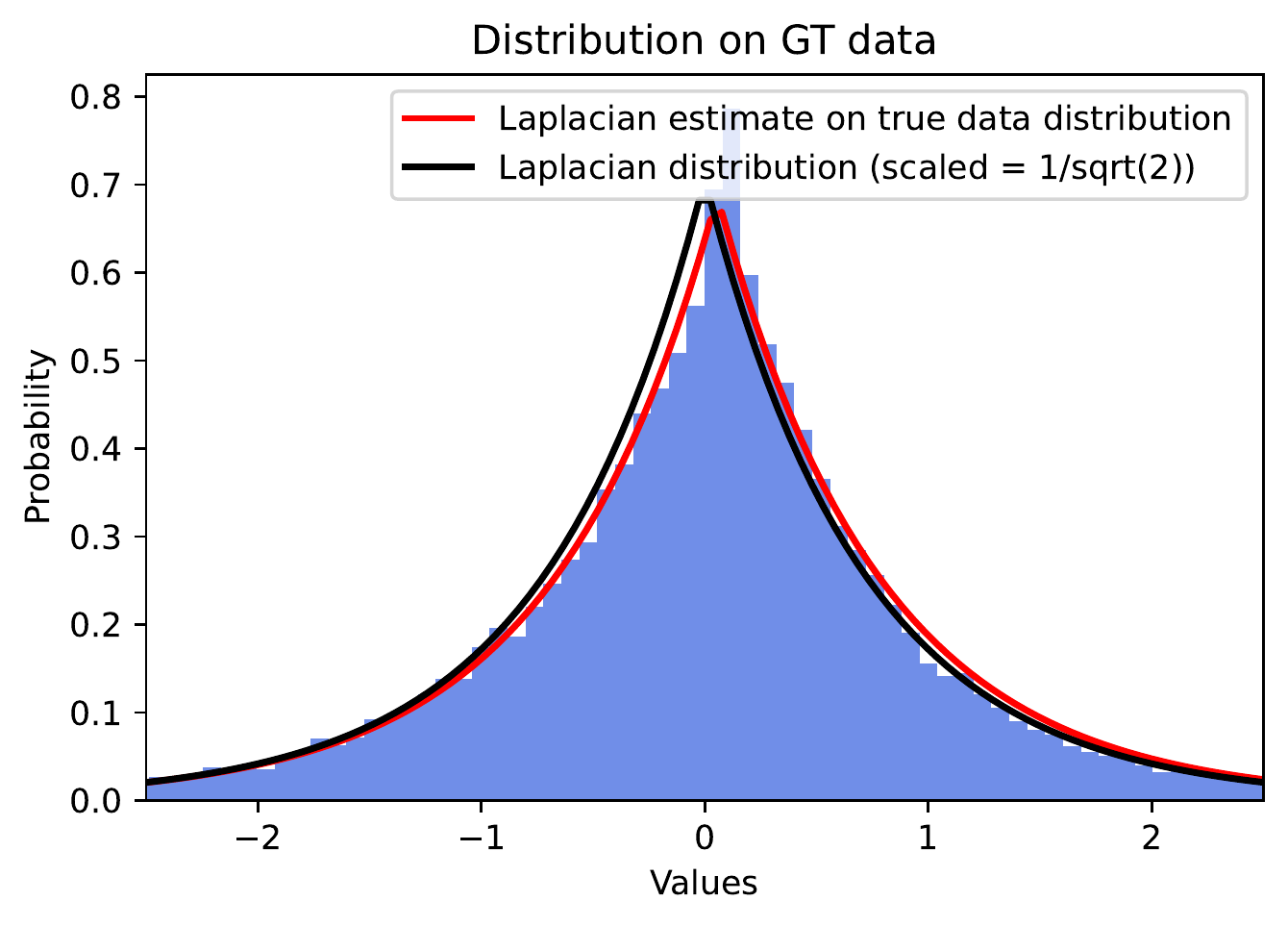}
    \includegraphics[width=0.24\textwidth]{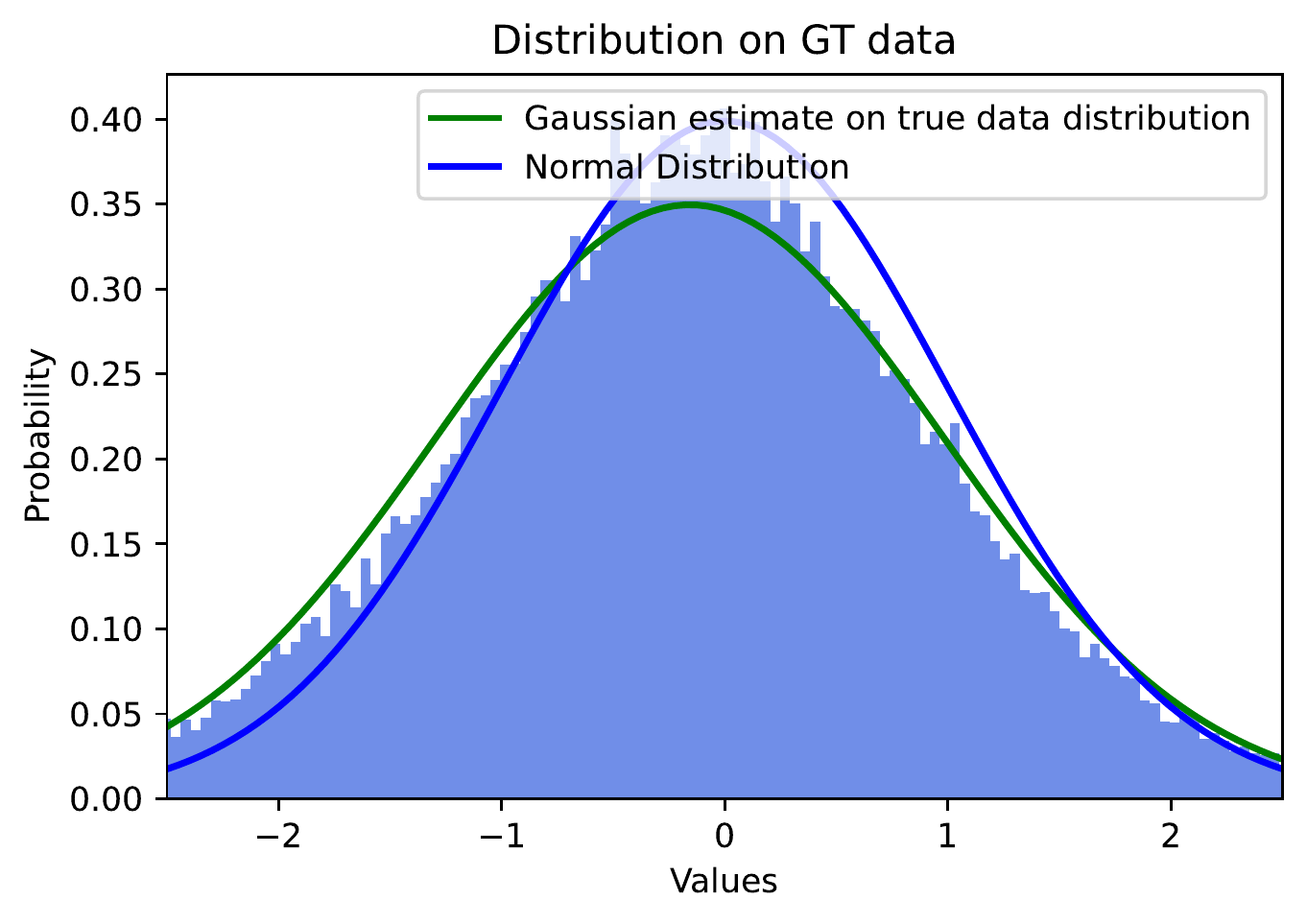}
    \caption{Distributions on the ImageNet validation set for several linear query functions. From left to right: Mean compared to normal distribution, mean compared to Laplacian distribution, 32nd DCT coeffcient compared to Laplacian distribution, random normal vector compared to normal distribution. In each plot the approximate distribution used by the attacker is visualized in blue/black and the ground-truth (GT) distribution in green/red.}
    \label{fig:distributions}
\end{figure}

\begin{table}
    \centering
    \begin{tabular}{c|c| c| c| c}
        Linear Function & Assumed Distribution & MSE & PSNR & IIP-Pixel \\
        \hline
        Mean & Normal & $0.0183$ & $75.75$ & $65.62\%$ \\
        Mean & Laplacian & $0.0174$ & $79.63$ & $71.88\%$ \\
        Cosine & Laplacian & $0.0167$ & $99.82$ & $79.69\%$ \\
        Random & Normal & $0.0203$ & $91.29$ & $75.00\%$ \\
    \end{tabular}
    \caption{Ablation study linear functions and distributions.}
    \label{tab:ablation}
\end{table}

\begin{figure}
    \centering
    \hfill
    \includegraphics[width=0.46\textwidth]{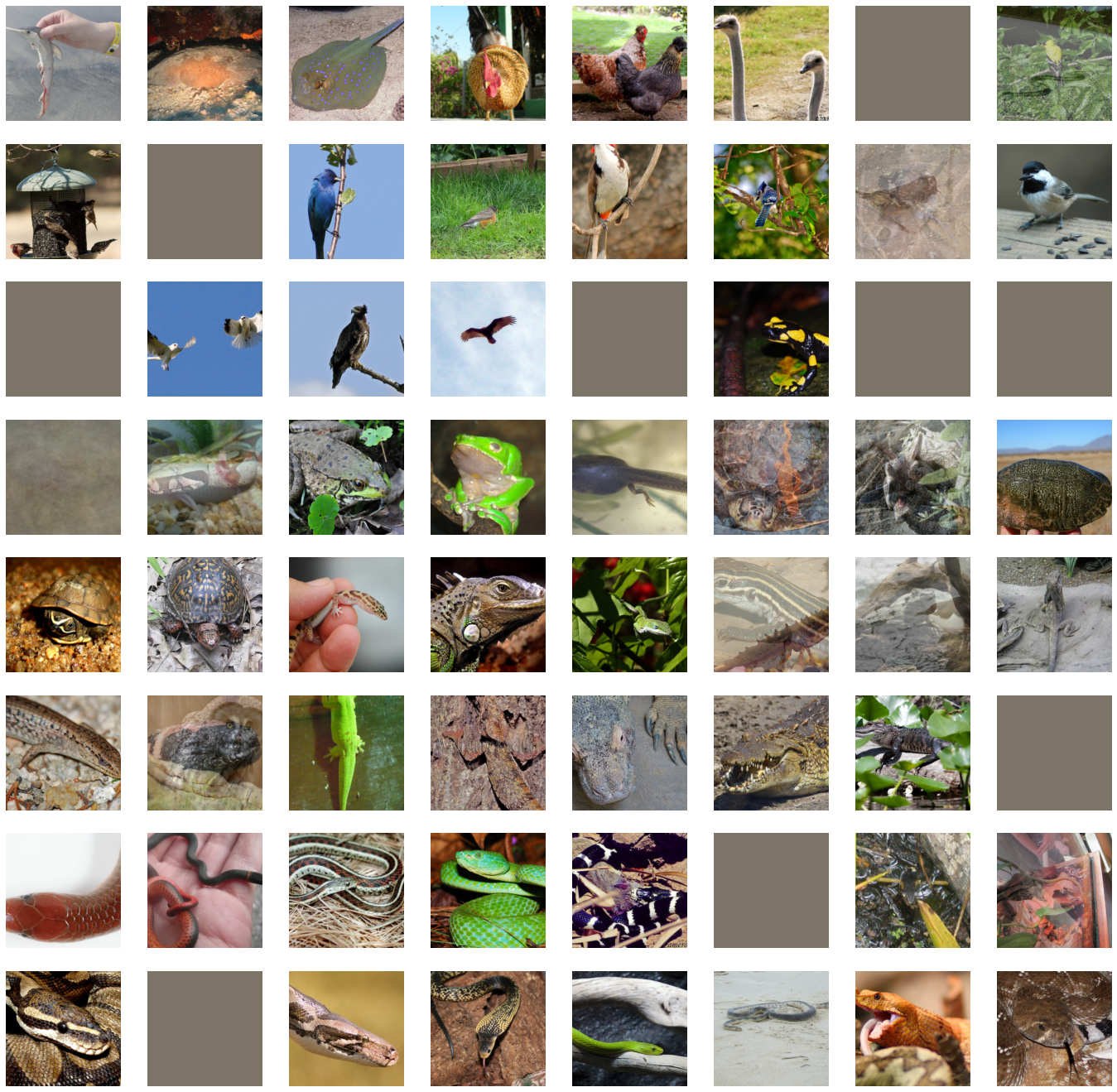}
    \hfill
    \centering
    \includegraphics[width=0.46\textwidth]{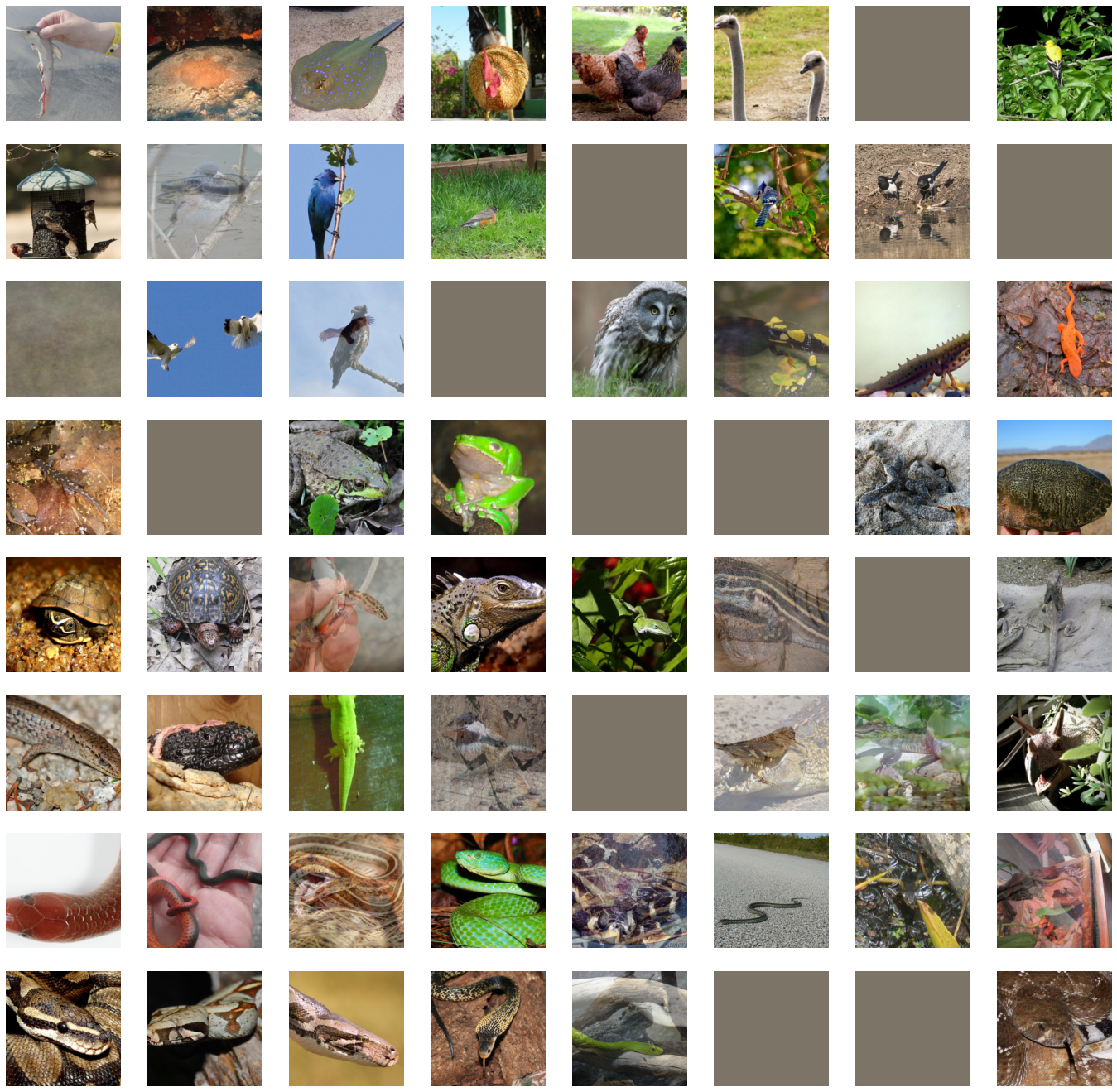} 
    \hfill
\caption{Analytic reconstruction for an imprint model with $128$ bins in front of a ResNet-18. \textbf{Left:} The linear function is the 32nd DCT coefficient and bins are based on a Laplacian distribution. \textbf{Right:} Linear function is a Gaussian random vector and bins are based on a normal distribution. Gray reconstructions denote bins in which no data point falls.)}
\label{fig:standard_imagenet_more_cdf_knowledge}
\end{figure}

\subsection{Comparison to Honest Servers and Optimization-based Attacks}
We argue that the proposed attack operating in our threat model is significantly more threatening than existing optimization-based attacks in the honest-but-curious server model. To illustrate this point and show by example that the change is threat model which amounts to only a minor architectural change in the neural network leads to a massive difference in reconstruction, we run the attack of \cite{geiping_inverting_2020-1} in the scenario of \cref{fig:standard_imagenet}. The results can be found in \cref{fig:optimization_imagenet}. The attack leads to an IIP score of $6.25\%$ when measuring in pixel space, $6.25\%$ when measuring in LPIPS \citep{zhang_unreasonable_2018} and, and $20.31\%$ when measuring the cosine distances in feature space of this model (the metric of \cite{yin_see_2021}). 

As an additional ablation, we also investigate whether the optimization-based attacks can find the optimal solution in the malicious server threat model that we consider. However, the right side of \cref{fig:optimization_imagenet} shows that at least conventional optimization-based methods have trouble finding the vulnerability introduced by the imprint module. The vulnerability that the attack solves analytically might be hard to exploit by first-order optimization or require specifically tuned optimization schemes to succeed. This also shows that defenses that attempt to detect privacy breaches by evaluating a range of optimization-based attacks would not have triggered an alarm for this attack.

\begin{figure}
    \centering
    \hfill
    \includegraphics[width=0.46\textwidth]{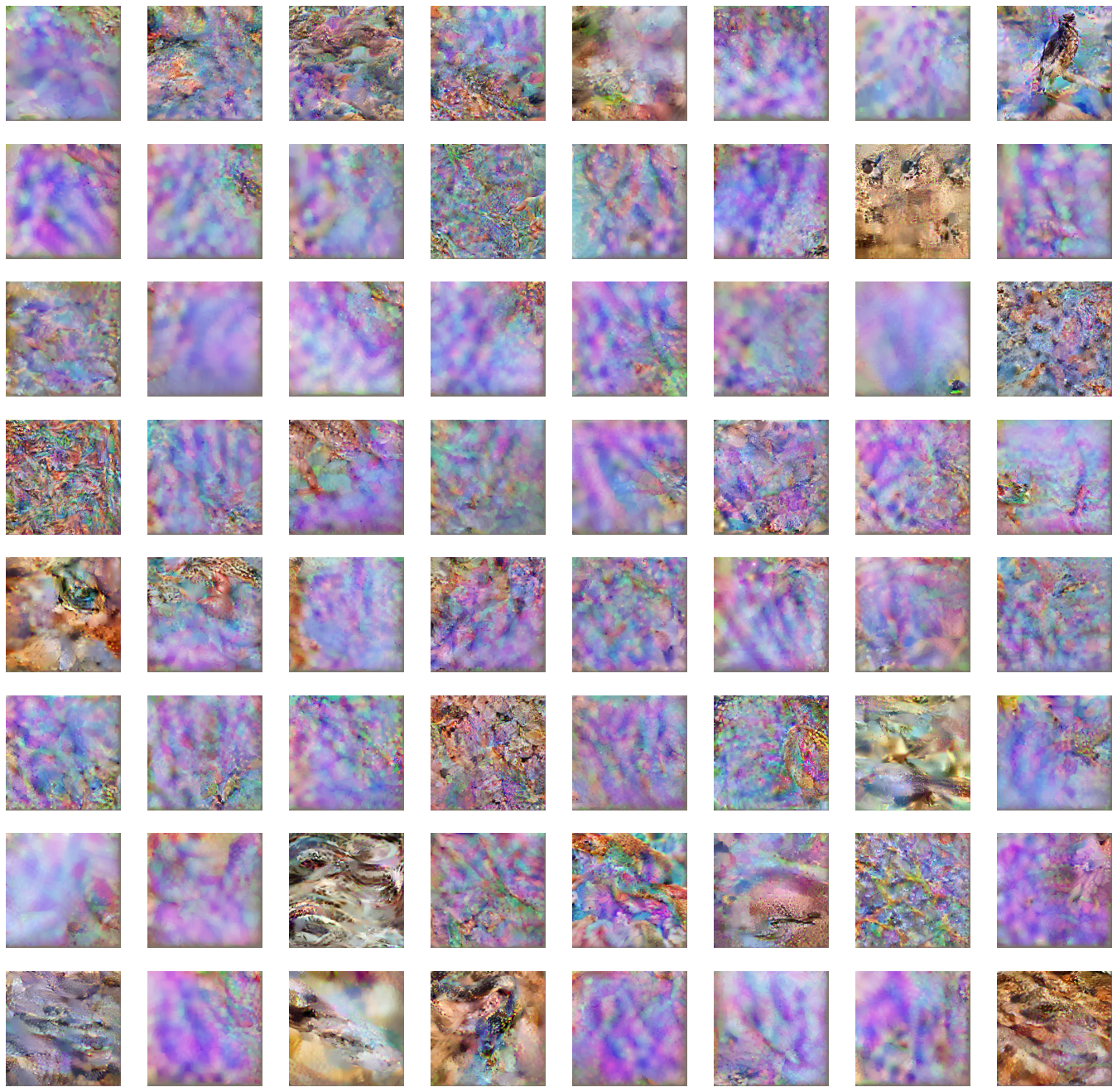}
    \hfill
    \centering
    \includegraphics[width=0.46\textwidth]{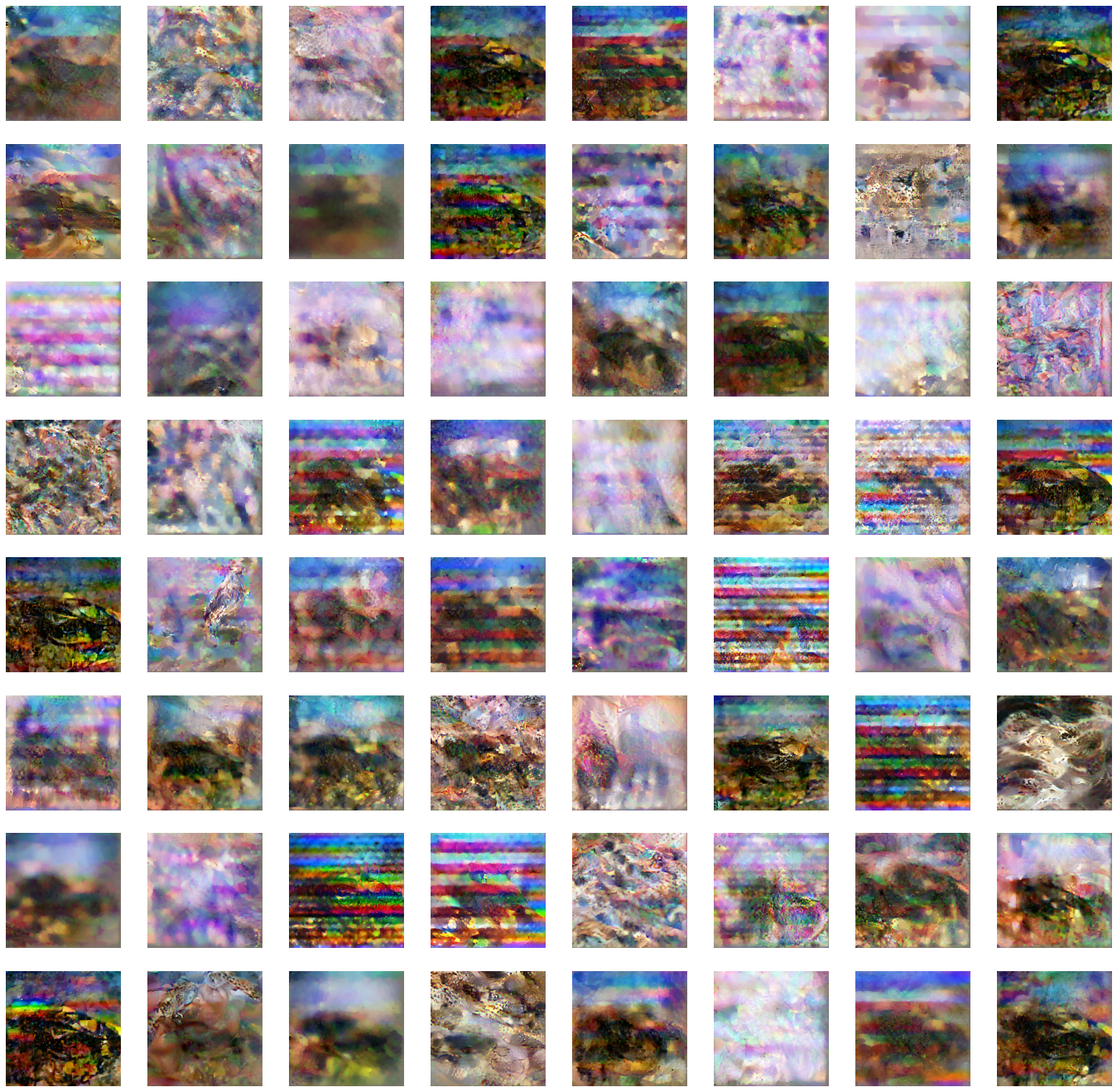} 
    \hfill
\caption{Optimization-based attack of \citet{geiping_inverting_2020-1} for a ResNet-18 and a batch size of 64, the setting of \cref{fig:standard_imagenet}. \textbf{Left:} An honest server model.  \textbf{Right:} Gradient inversion attack applied to a module that contains the imprint module.}
\label{fig:optimization_imagenet}
\end{figure}

\subsection{Other Data Modalities - Text}

As discussed in the main body, the attack is entirely data-agnostic and could be launched against any kind of input data, e.g. not only image data but also tabular features or text. In principle the input data could be comprised of random signals - these can still be binned and separated by the proposed attack. We verify this property by launching the same attack against text data. 

We investigate the transformer architecture discussed in \citet{wang_field_2021} specifically for language tasks in federated learning scenarios and insert the imprint module as malicious block right after the word embeddings. This is strictly a maliciously modified architecture - normal feedforward blocks in a transformer would not span across the entire length of the sequence. We initialize the linear function as a Gaussian random vector and make no modifications to the attack hyperparameters. We recover the input tokens ids from a direct lookup of their closest match in the word embedding layer \citep{zhu_deep_2019}. To evaluate the success of the attack we generate sample batches of sentences from the wikitext dataset \citep{merity_pointer_2016} with a batch size of $128$ and a sequence length of $32$, tokenized via the GPT-2 tokenizer \citep{radford_language_2019}. Instantiating the attack with 512 bins immediately reveals 110 out of 128 sentences perfectly, leading to an overall reconstruction accuracy of $86.33\%$ which is also a BLEU score of $88\%$ and ROUGE-L of $87\%$. We refer to our open source implementation for further details.

We show examples of recovered data below, but note that printing the recovered text is not that insightful. The attack perfectly recovers a subset of sentences of user text and cannot recover some other sentences entirely, in full analogy to the results for images in e.g. \cref{fig:standard_imagenet_more_cdf_knowledge}:

Recovered wikitext data:\\
\texttt{
 The Tower Building of the Little Rock Arsenal, also known as U.S. Arsenal Building, is a building located in MacArthur Park in downtown Little Rock, Arkansas
. Built in 1840, it was part of Little Rock's first military installation. Since its decommissioning, The Tower Building has housed two museums. It
 was home to the Arkansas Museum of Natural History and Antiquities from 1942 to 1997 and the MacArthur Museum of Arkansas Military History since 2001. It has also been the
 headquarters of the Little Rock Æsthetic Club since 1894. 
 The building receives its name from its distinct octagonal tower. Besides being the last
,,,,,,,,,,,,,,,,,,,,,,,,,,,,,,,,
,,,,,,,,,,,,,,,,,,,,,,,,,,,,,,,,
}

\subsection{Technical Details}
All experiments were implemented in PyTorch \citep{paszke_automatic_2017} and were run on several laptop and machine CPUs, as the reconstruction itself requires only a few tensor operations. Especially, compared to optimization-based reconstruction techniques, this makes the approach significantly faster and significantly more portable. For visualization purposes and to measure accurate PSNR and IIP scores all images (which are recovered in the order given by the chosen function $h$) are matched with possible correspondences in the ground truth batch. The matching is found based on LPIPS feature similarities scores \citep{zhang_unreasonable_2018} which are matched using a linear sum assignment solver. No labels are recovered using the proposed approach which is entirely label-agnostic, but labels could be assigned a-posteriori using model predictions of the reconstructed data if required. For experiments where the imprint module is placed deeper into a network, the network (which is here a ResNet) is linearized by resetting batch normalization parameters and buffers to the identity map, setting all residual paths to zero and initializing the first convolution and the shortcut convolutions to identity maps. The nonlinearities can be bypassed by bias shifting as in \cite{goldblum_truth_2020}.

PSNR scores are computed as average PSNR where we first compute PSNR scores per image and then average. This procedure is standard in computer vision, but does bias the score toward successful reconstructions, as the minimal PSNR score is bounded at 0, but its potential upside unbounded. For the image identifiability precision (IIP) score of \citet{yin_see_2021} we implement the score as proposed therein, but measure nearest neighbors not in the model feature space (which we consider biased, given that the model parameters are already used for reconstruction), but directly in image pixel space, where we check whether the given reconstruction is indeed closer to its true counterpart in euclidean distance than any other image from this class in the validation set.

\subsection{Additional Images}

\begin{figure}[h!]
    \centering
    \includegraphics[width=0.75\textwidth]{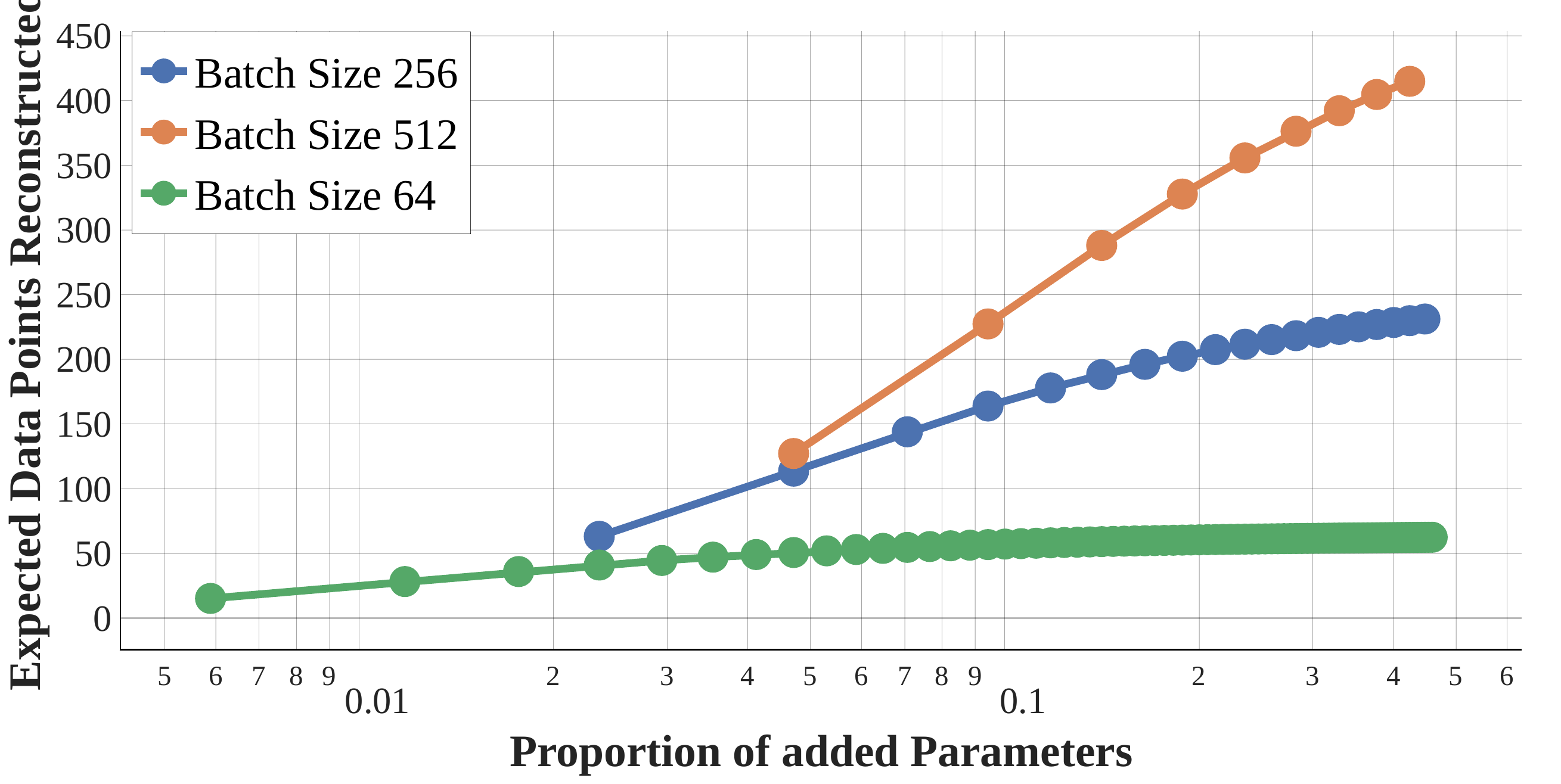}
    \caption{Expected number of data points recovered for several batch sizes and increased bins and corresponding parameter increase. Model: ResNet50 with ImageNet, targeting the input to the 3rd residual block.}
    \label{fig:expected_detailed}
\end{figure}

\begin{figure}[h!]
\centering
\begin{minipage}[c]{0.49\textwidth}
\centering
    \includegraphics[scale=0.13]{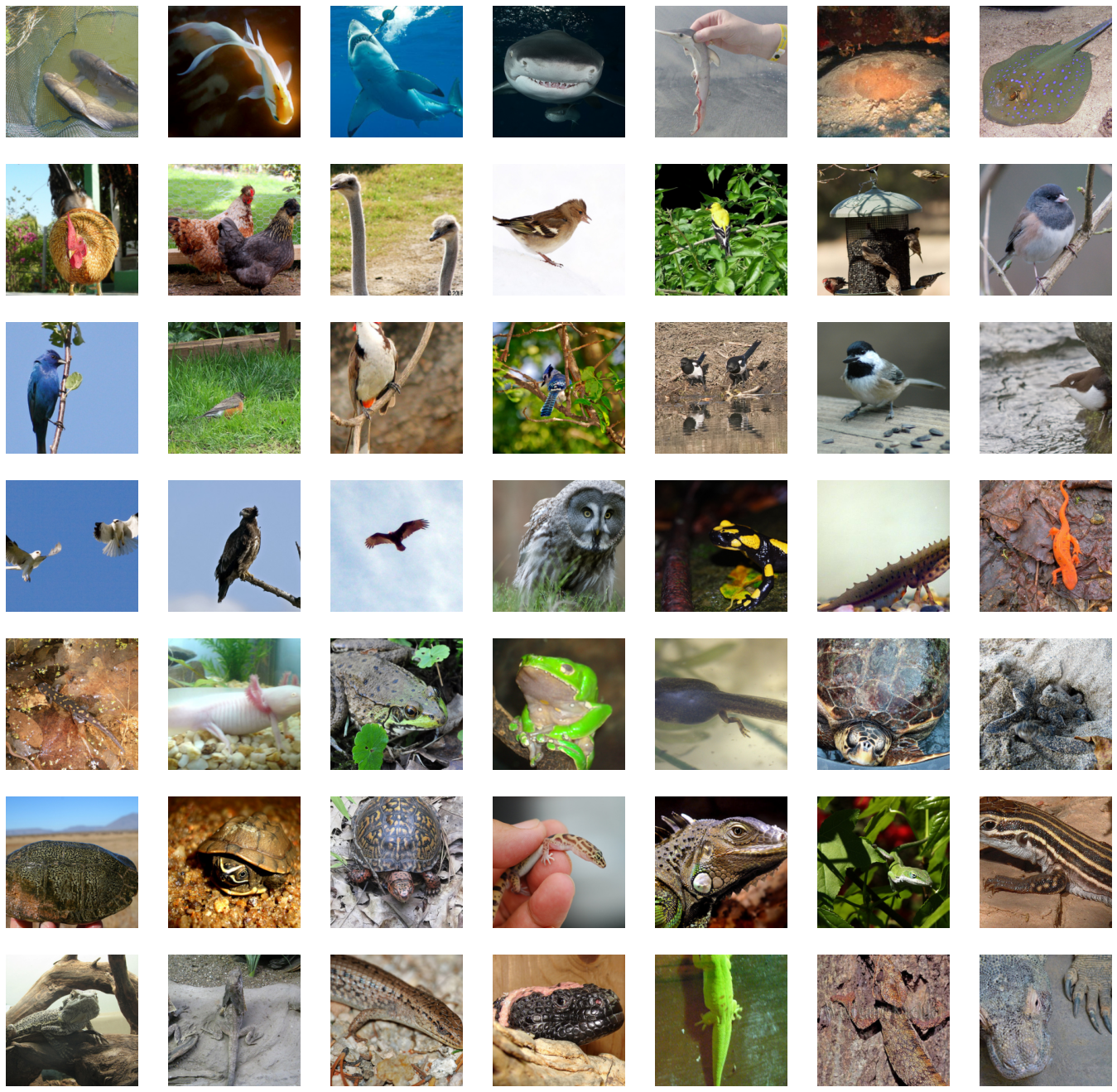}
\end{minipage}
\begin{minipage}[c]{0.49\textwidth}
\centering
    \includegraphics[scale=0.13]{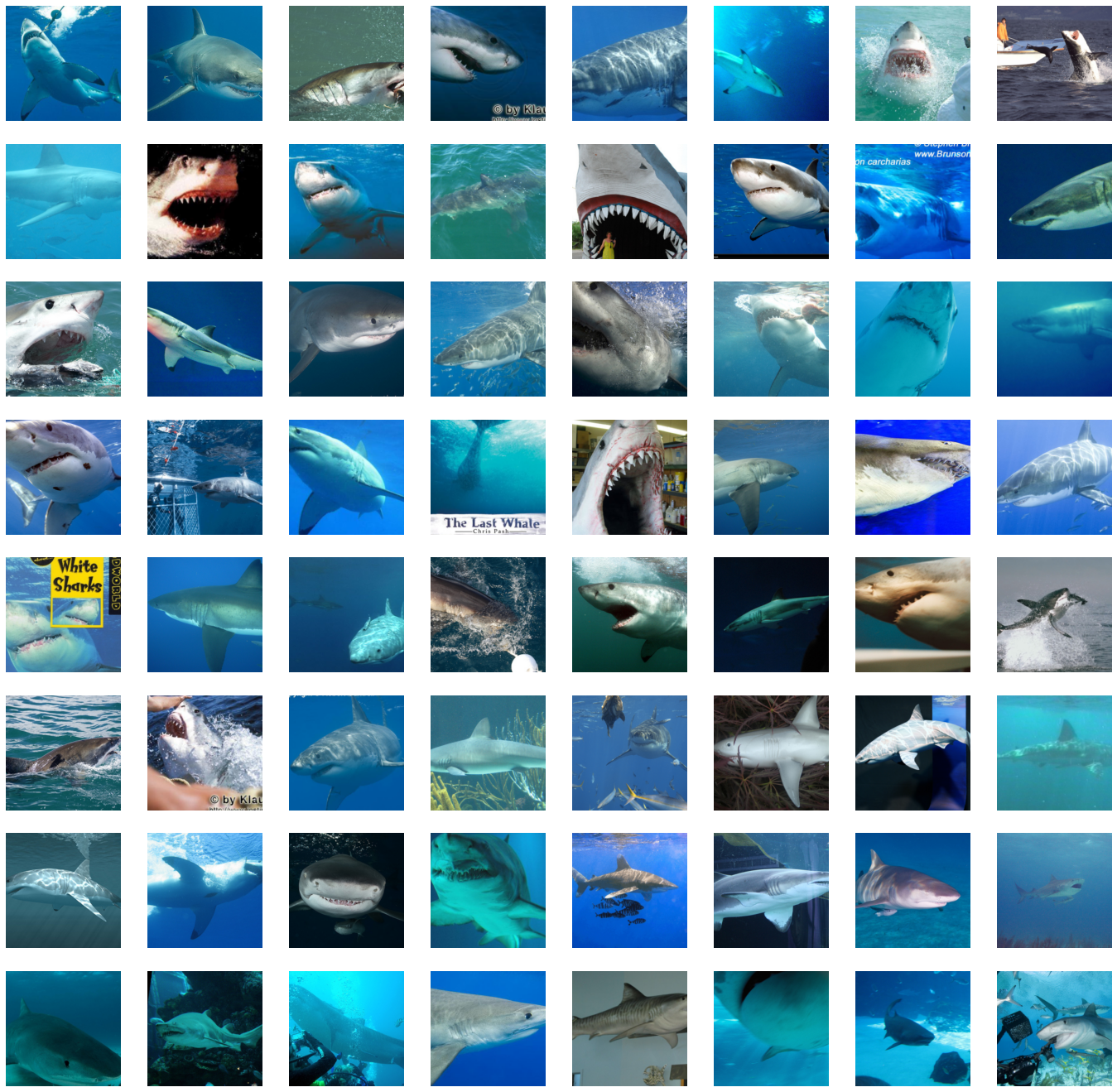}
\end{minipage}
\caption{\textbf{Left:} Raw data for the 64 ImageNet images with separate classes. \textbf{Right:} Raw data for the 64 images from the white shark class.}
\label{fig:true_imagenet}
\end{figure}

\begin{figure}
\centering
\begin{minipage}[c]{0.49\textwidth}
\centering
    \includegraphics[scale=0.13]{images/true_animals.png}
\end{minipage}
\begin{minipage}[c]{0.49\textwidth}
\centering
    \includegraphics[scale=0.13]{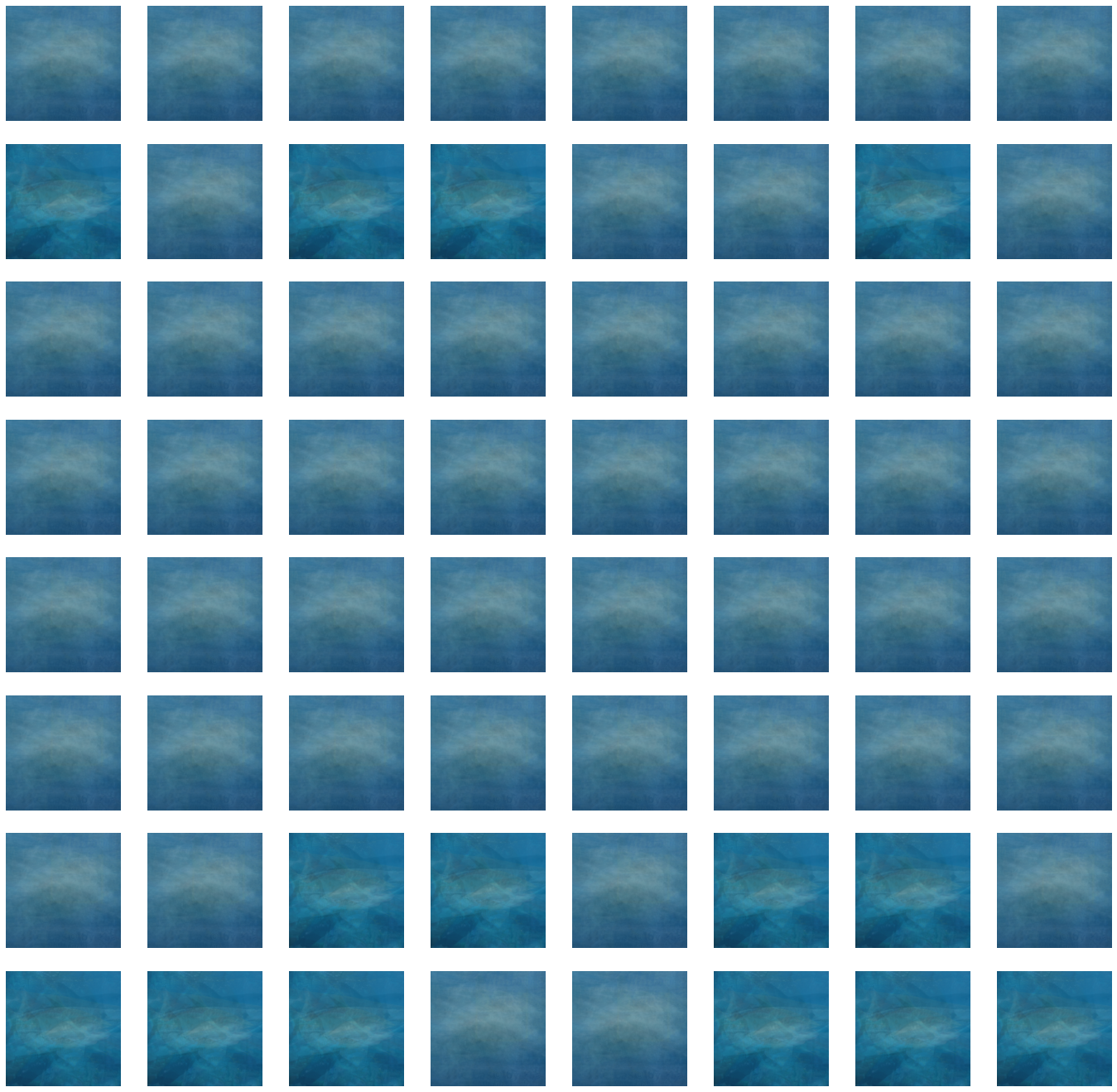}
\end{minipage}
\caption{\textbf{Left:} Analytic reconstruction for a linear model of 64 ImageNet images with separate classes (PSNR: $36.45$ versus true user data). \textbf{Right:} Same recovery algorithm but for 64 images from the same class (white shark), (PSNR: $13.84$ versus true user data.)}
\label{fig:motivation_imagenet}
\end{figure}

\begin{figure}[h!]
\centering
\begin{minipage}[c]{0.49\textwidth}
\centering
    \includegraphics[scale=0.55]{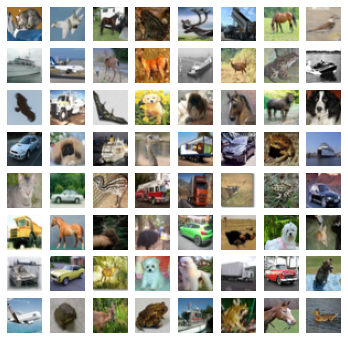}
\end{minipage}
\begin{minipage}[c]{0.49\textwidth}
\centering
    \includegraphics[scale=0.55]{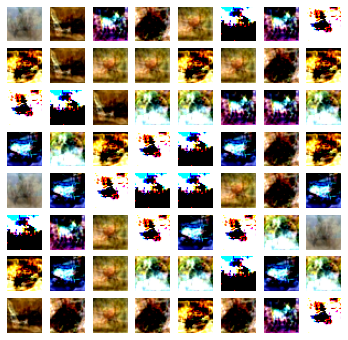}
\end{minipage}
\caption{\textbf{Left (a)}: A batch of 64 CIFAR10 images. \textbf{Right (b)}: The same batch of images reconstructed naively using \cref{eq:recovery}.}
\label{fig:naive}
\end{figure}
This section contains additional image examples, such as using CIFAR-10 in \cref{fig:naive} and \cref{fig:imprint_cifar} where an attack with the imprint module on this dataset shows that almost all of the user data is perfectly recovered. Furthermore, example panels of imprint modules inserted in later ResNet layers are visualized as well as the results of the sparse variant that is used to attack a federated averaging scheme.

\begin{figure}[h!]
\centering
\begin{minipage}[c]{0.49\textwidth}
\centering
    \includegraphics[scale=0.5]{images/clean_cifar_batch.png}
\end{minipage}
\begin{minipage}[c]{0.49\textwidth}
\centering
    \includegraphics[scale=0.5]{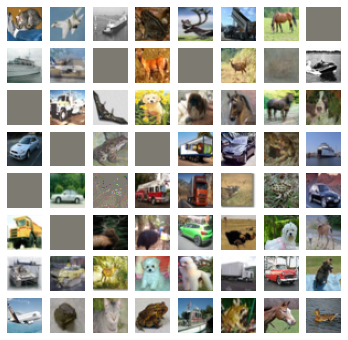}
\end{minipage}
\caption{\textbf{Left (a)}: A batch of 64 CIFAR10 images. \textbf{Right (b)}: The same batch of images reconstructed using the imprint module with $300$ bins. Gray images can result from collisions within a given bin.}
\label{fig:imprint_cifar}
\end{figure}

\begin{figure}
    \centering
    \includegraphics[width=0.24\textwidth]{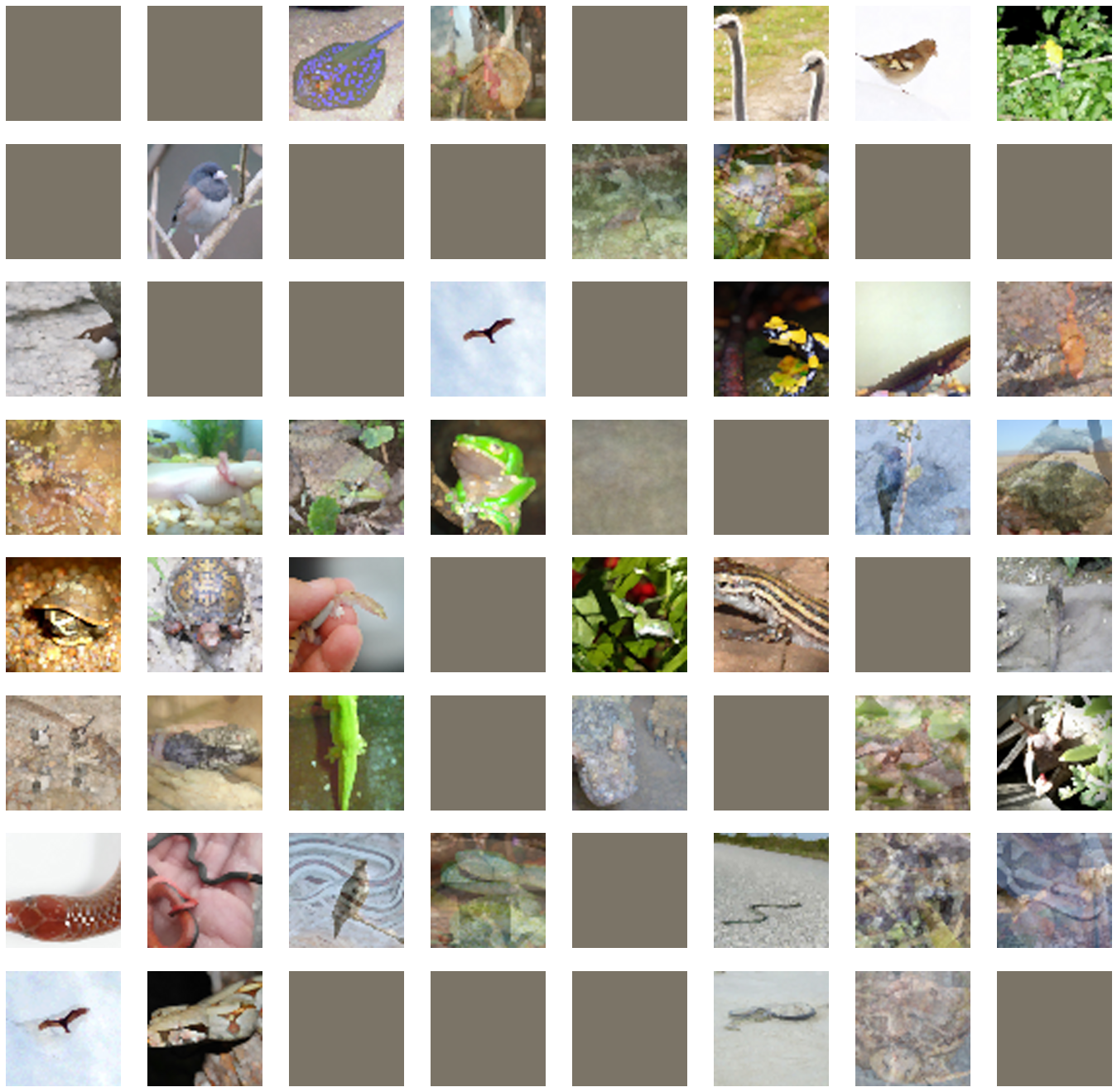} 
    \hfill
    \includegraphics[width=0.24\textwidth]{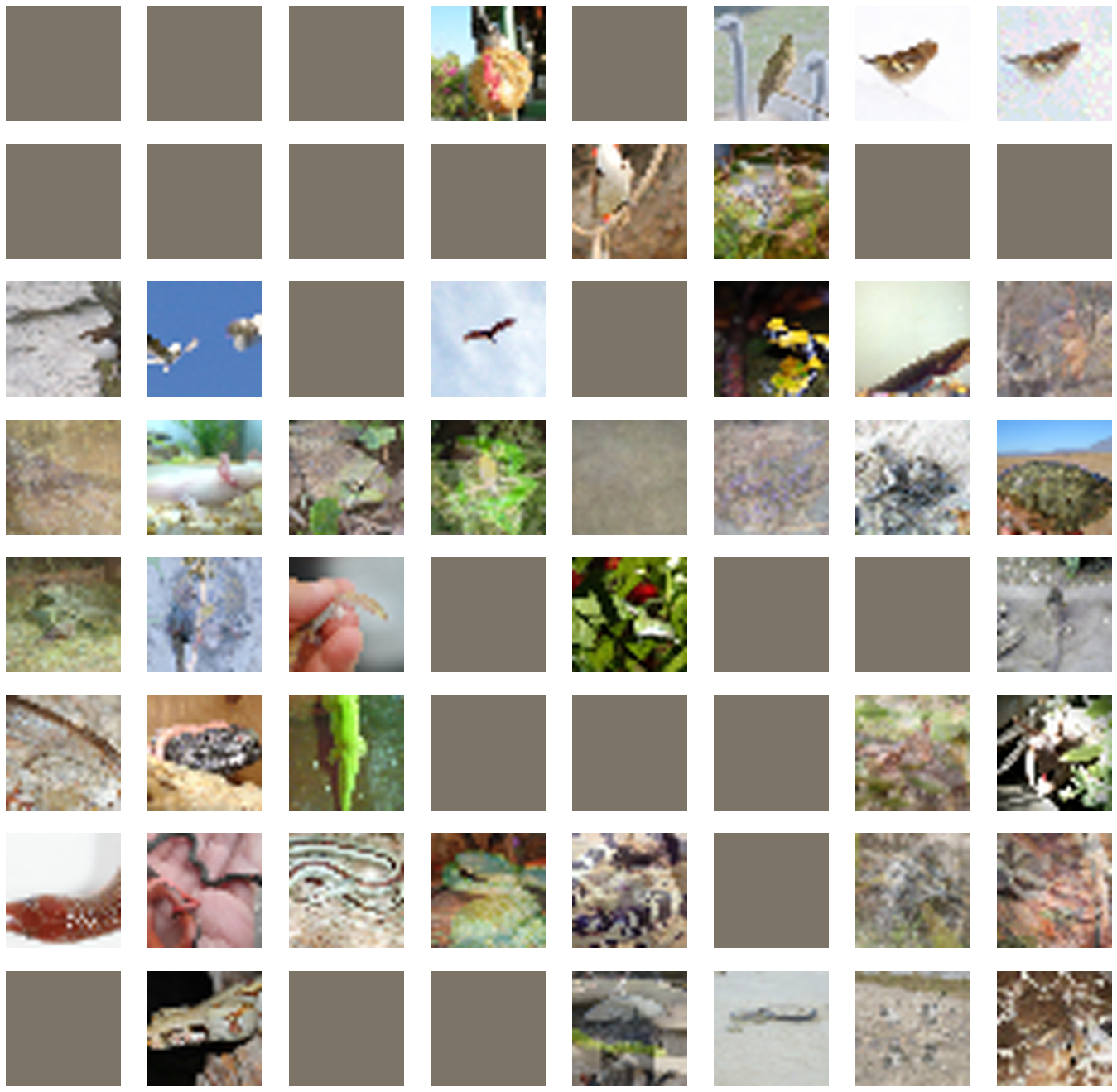} 
    \hfill
    \includegraphics[width=0.24\textwidth]{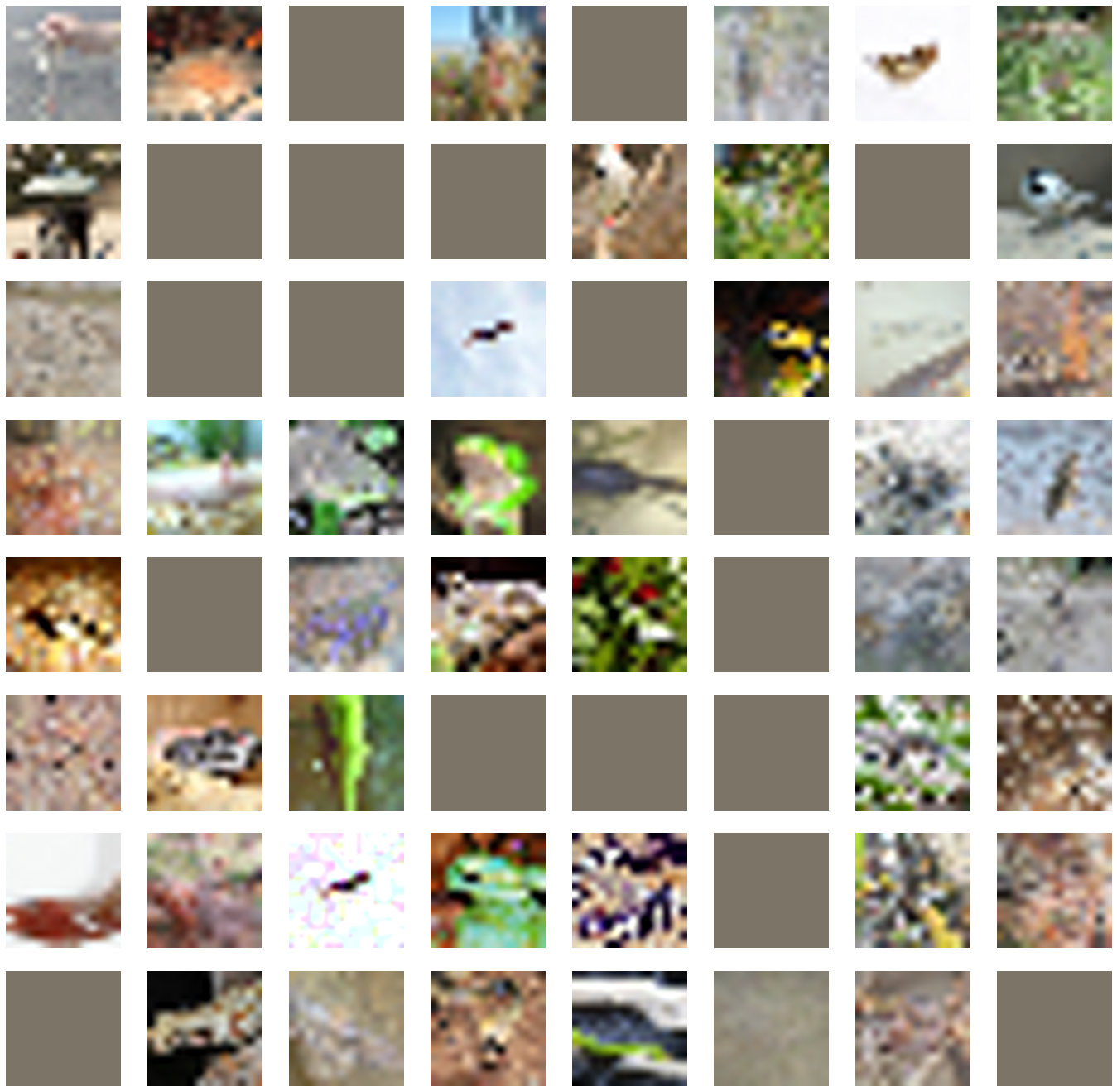} 
    \hfill
    \includegraphics[width=0.24\textwidth]{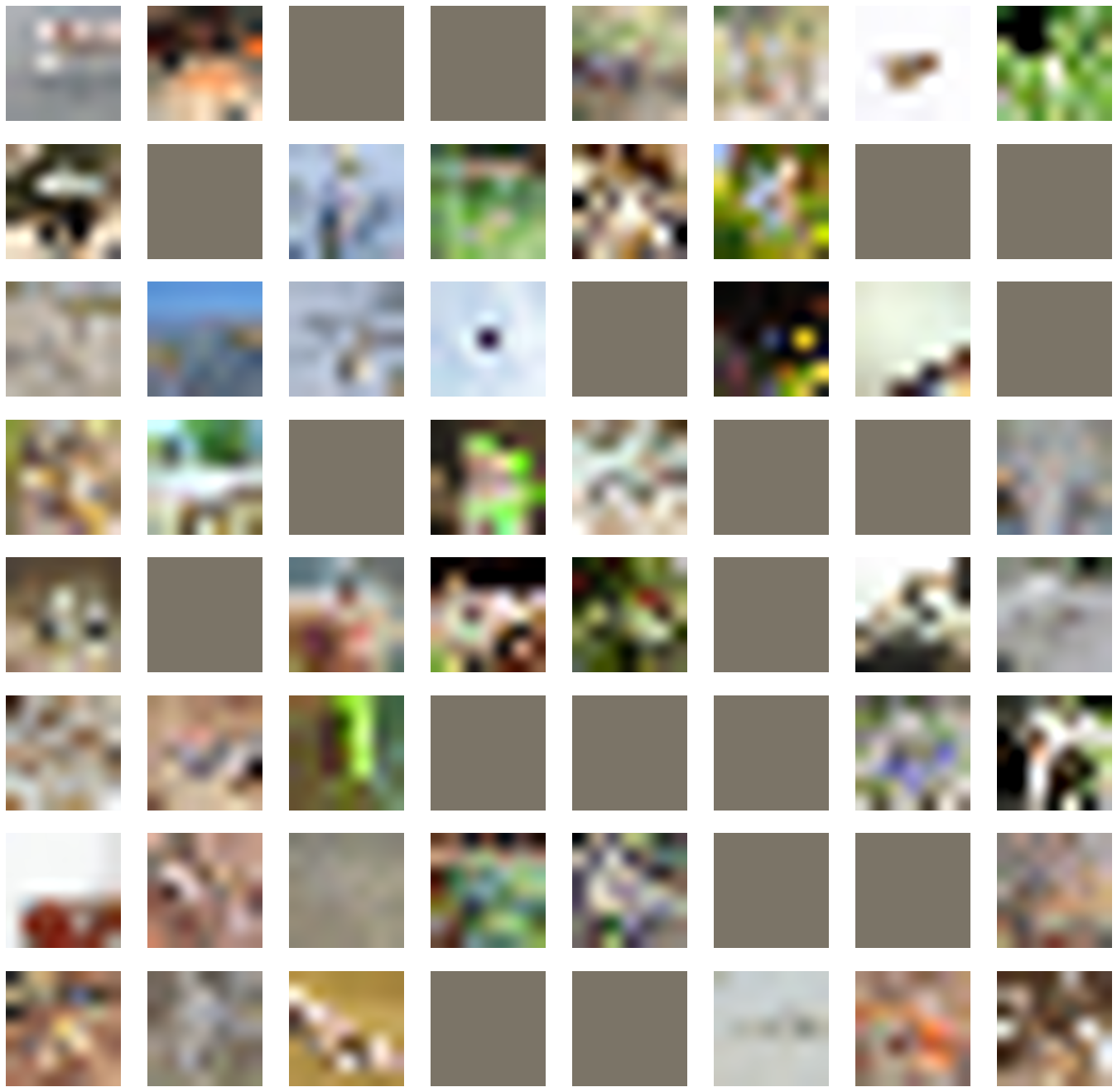} 
    \caption{Different placements of the imprint module in a ResNet-18. From left to right: Before the first block ($56x56$), before the third block ($28x28$), before the fourth block ($14x14$) and before the last average pooling ($7x7$).
    Compare to a placement before the first convolution ($224x224$) and raw input data in \cref{fig:standard_imagenet}. Enlarged versions of each panel can be seen in Figs. \ref{fig:big_placement56} -  \ref{fig:big_placement7}}
    \label{fig:placement}
\end{figure}

\begin{figure}[h!]
\centering
\begin{minipage}[c]{\textwidth}
\centering
    \includegraphics[scale=0.2]{images/standard56x56_64_128_15_04.png}
\end{minipage}

\caption{Different placements of the imprint module in a ResNet-18. Before the first block ($56\times56$). Only the first three channels at this position are utilized in this demonstration.}
\label{fig:big_placement56}
\end{figure}

\begin{figure}[h!]
\centering
\begin{minipage}[c]{\textwidth}
\centering
    \includegraphics[scale=0.2]{images/standard28x28_64_128_14_63.png}
\end{minipage}

\caption{Different placements of the imprint module in a ResNet-18. Before the third block ($28\times28$). Only the first three channels at this position are utilized in this demonstration.}
\label{fig:big_placement28}
\end{figure}

\begin{figure}[h!]
\centering
\begin{minipage}[c]{\textwidth}
\centering
    \includegraphics[scale=0.2]{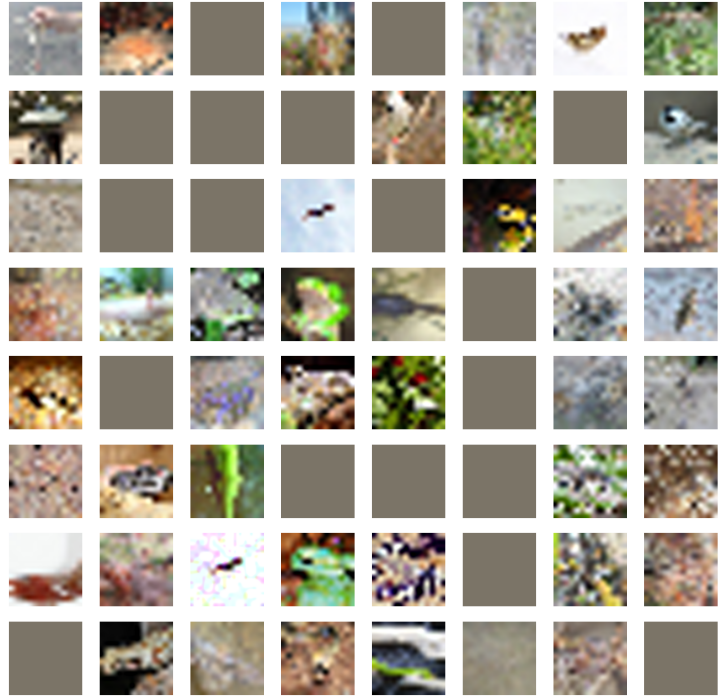}
\end{minipage}
\caption{Different placements of the imprint module in a ResNet-18. Before the fourth block ($14\times14$)}
\label{fig:big_placement14}
\end{figure}

\begin{figure}[h!]
\centering
\begin{minipage}[c]{\textwidth}
\centering
    \includegraphics[scale=0.2]{images/standard7x7_64_128_13_46.png}
\end{minipage}
\caption{Different placements of the imprint module in a ResNet-18. Before the last average-pooling layer ($7\times7$). Only the first three channels at this position are utilized in this demonstration.}
\label{fig:big_placement7}
\end{figure}

\begin{figure}[h!]
\centering
\begin{minipage}[c]{\textwidth}
\centering
    \includegraphics[scale=0.2]{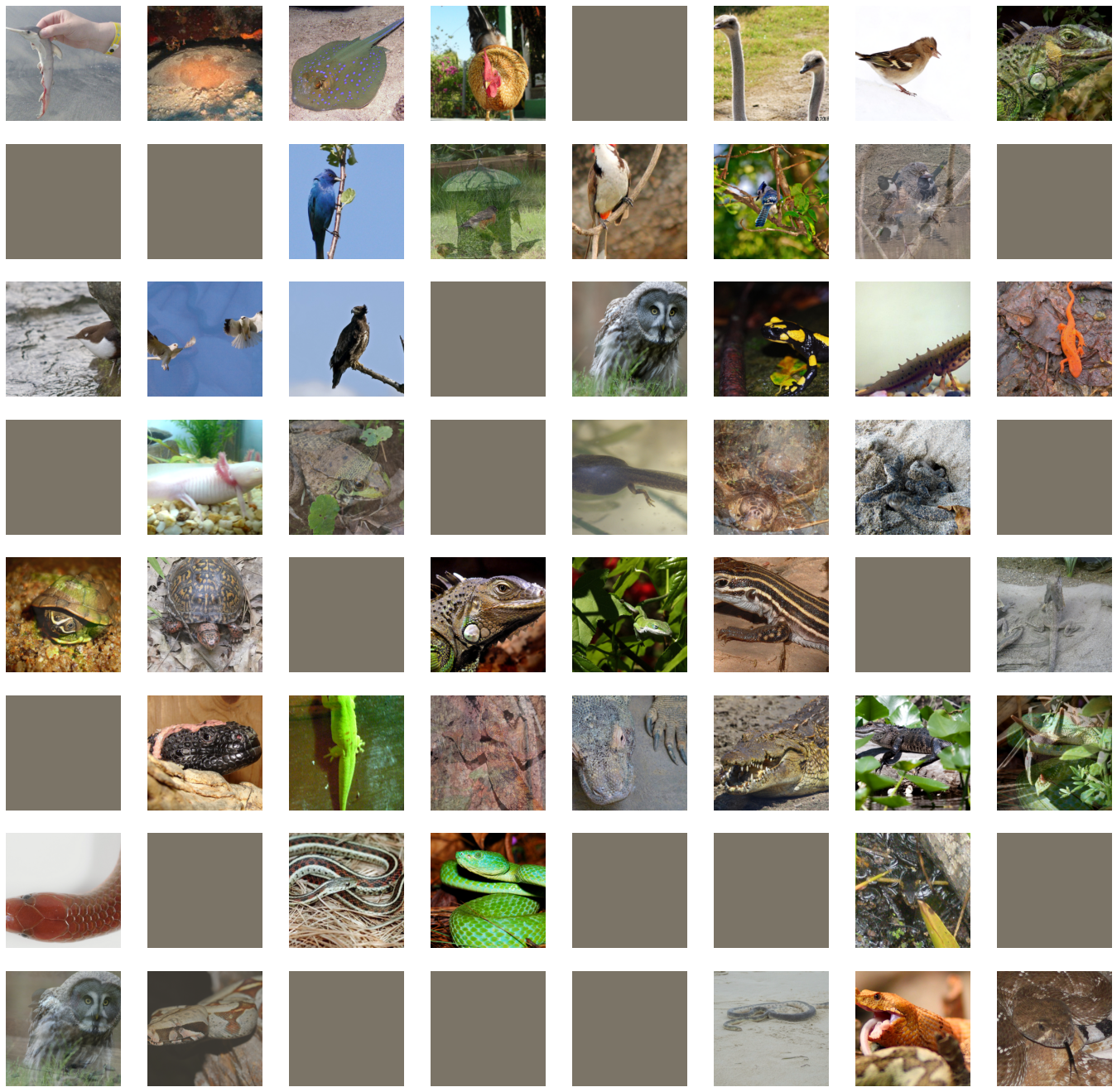}
\end{minipage}
\caption{Results for federated averaging for 8 steps with 8 images each, i.e. 64 unique data points for a single user, and 128 bins. PSNR: 32.65. IIP: 70.31\%. Drift of bins during local updates leads to a few duplicated entries.}
\label{fig:fedavg}
\end{figure}

\subsection{Defense Discussion}
\label{ap:defenses}

\begin{figure}[h!]
\centering
\begin{minipage}[c]{0.49\textwidth}
\centering
    \includegraphics[height=0.15\textheight]{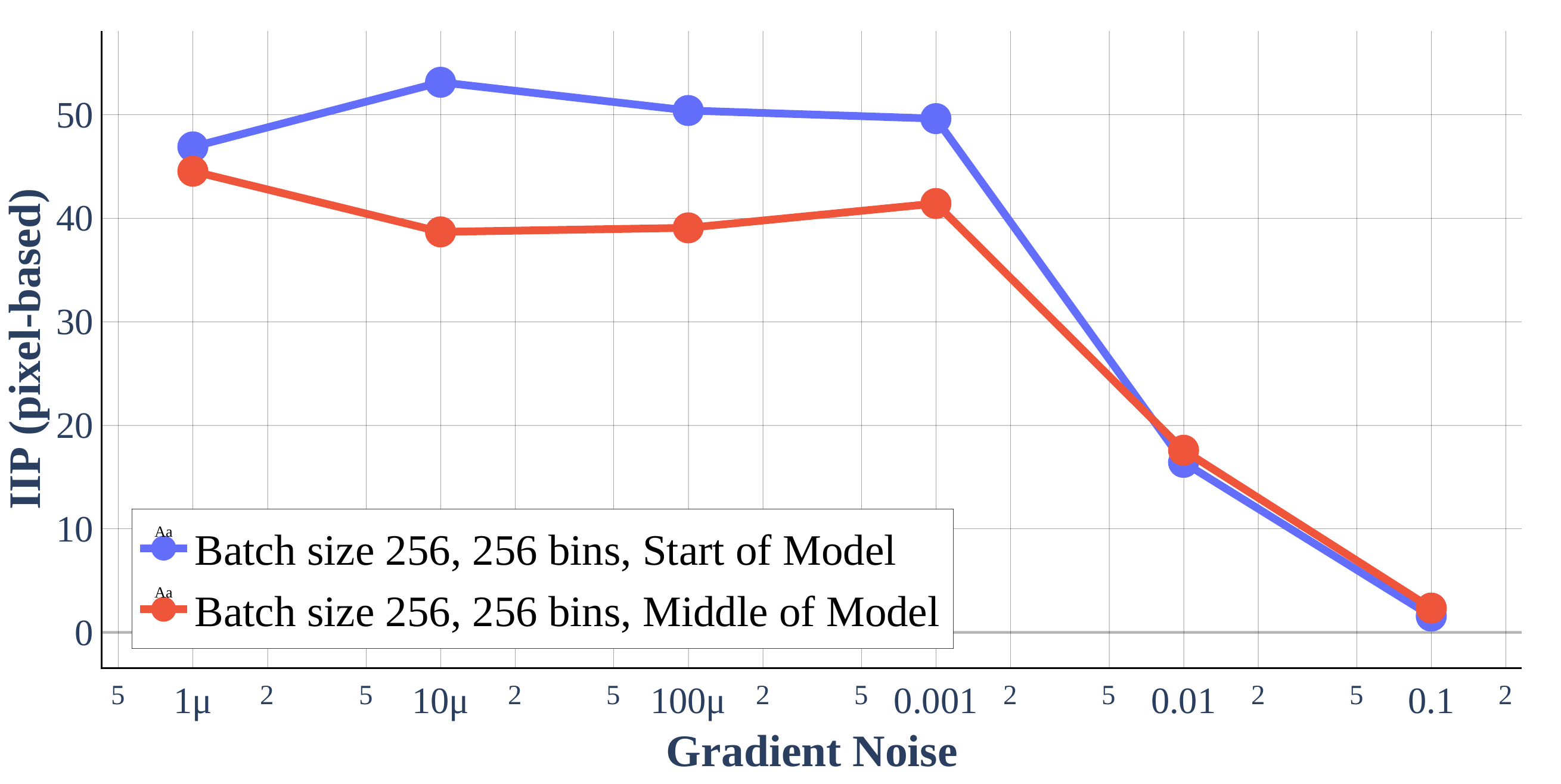}
\end{minipage}
\begin{minipage}[c]{0.49\textwidth}
\centering
    \includegraphics[height=0.15\textheight]{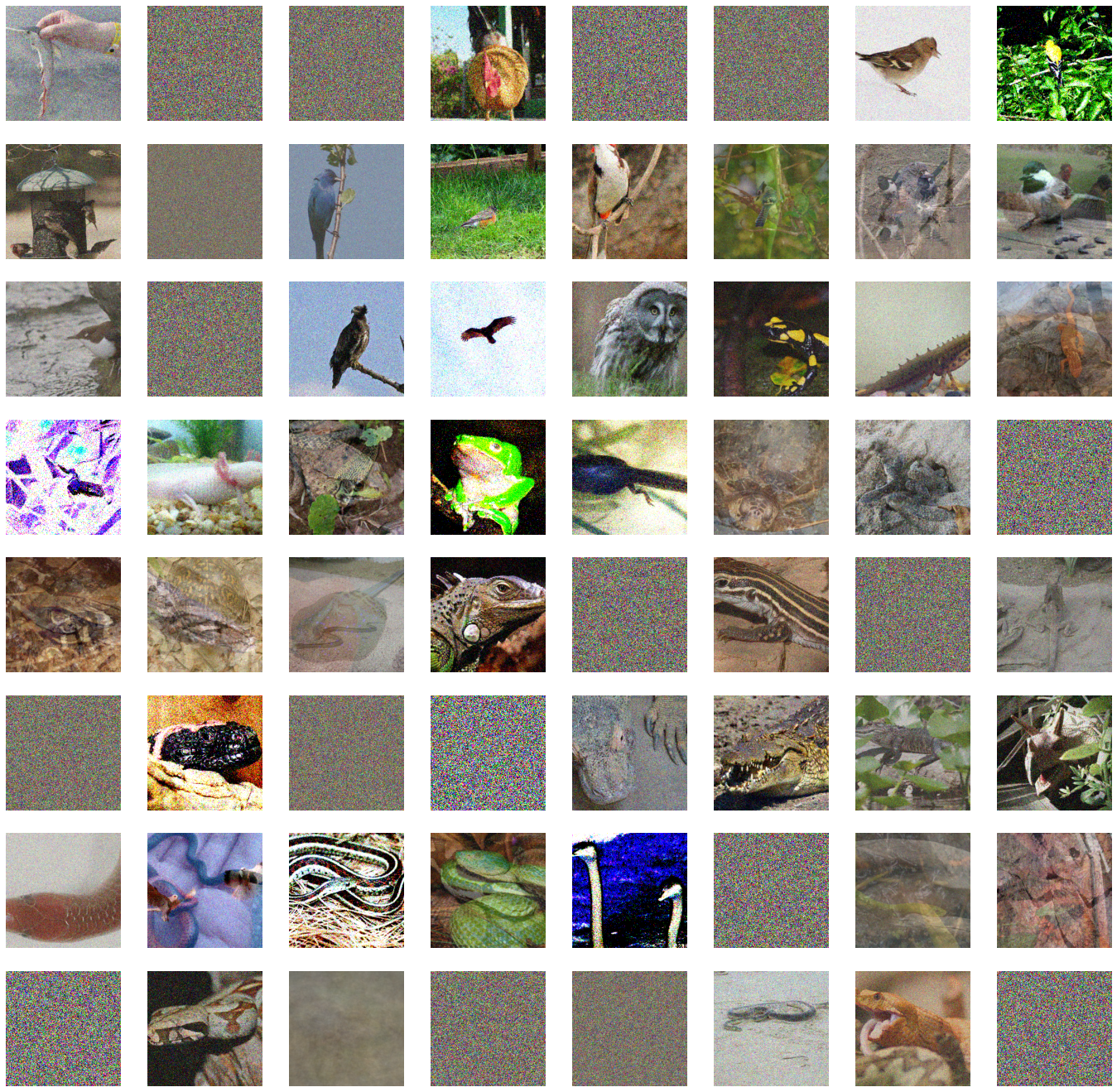}
\end{minipage}
    \caption{\textbf{Left:} IIP score vs Laplacian gradient noise. \textbf{Right:} Exemplary recovery for $\sigma=0.01$. Recovery is stable for a large range of Laplacian gradient noise injections.}
    \label{fig:gradnoise}
\end{figure}
\vspace{-4mm}

An algorithmic defense against the proposed attack would be to validate the incoming model parameters on the user side. There, the attack with multiple bins requires repeated computations of the same quantity. At first, this pattern could be detected by rank analysis of all linear layers (which would return a rank of 1 for the linear layer of the imprint module described above). Yet, the attacker can easily randomize a few entries of the linear layer to increase its rank without significantly weakening the attack, or introduce additional rows that compute normal deep features, so that we do not believe a defender can win by model analysis under the given threat model.
\subsection{Comparison to Concurrent Work}
We include a comparison to the concurrent attack of \citet{boenisch2021curious} operating in the same threat model in \cref{fig:cah_comparison}. The threat model is characterized purely as a parameter modification therein that only applies to models with input-sized linear layers followed by ReLU activations. In the same vein the attack discussed in this work does not require malicious architecture modifications if these vulnerable layers are already present.

\begin{figure}[h!]
    \centering
    \includegraphics[width=0.8\textwidth]{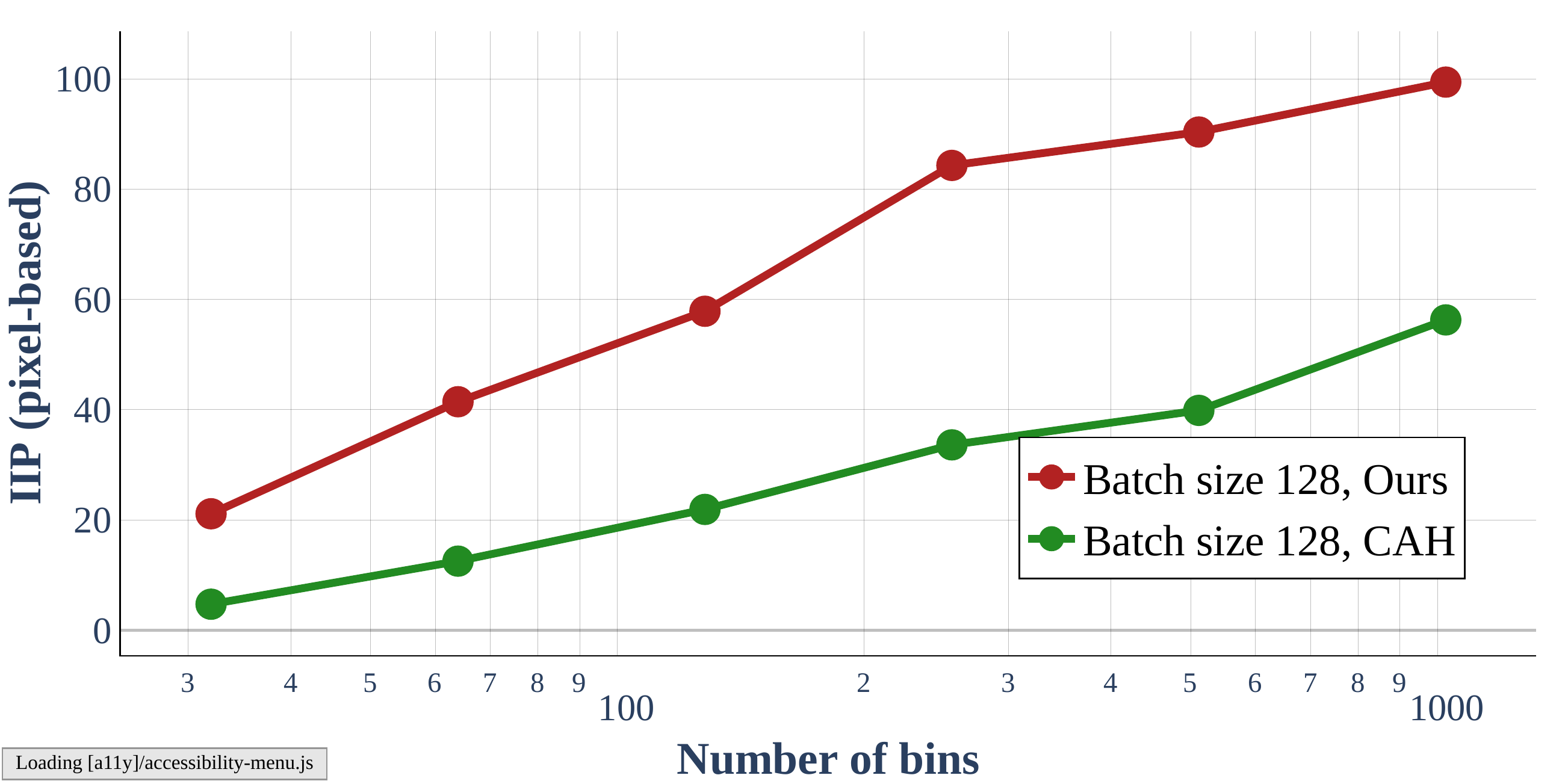}
    \caption{Comparison to the "Curious Abandon Honesty" (CAH) attack proposed in \citet{boenisch2021curious}. We find our attack outperforms their attack at every scale.}
    \label{fig:cah_comparison}
\end{figure}

\section{Remark on Recovery}
\label{ap:note}
Note that for the recovery in \cref{eq:recovery}, we place either an averaging operation, or linear layer following $W$ with identical row elements. This is because in \cref{eq:recovery}, the attacker needs $\frac{\partial \mathcal{L}}{\partial b_i} = \frac{\partial \mathcal{L}}{\partial b_{i+1}}$. A sufficient condition for this is that the operation proceeding the imprint module, such as averaging, can be expressed as a matrix operation with identical row elements. This falls squarely within our assumed threat model and is empirically how we implement our attack.

\section{Code Release}
We provide open-source implementations of all attacks investigated in this work. The repository at \url{https://github.com/lhfowl/robbing_the_fed} shows a minimalistic implementation of the principles of this attack and the repository at \url{https://github.com/JonasGeiping/breaching} embeds the attack in a larger framework of privacy attacks against federated learning.

\end{document}